%% file: main.tex
\documentclass[11pt]{article}

\usepackage[a4paper,margin=1in]{geometry}

\usepackage[utf8]{inputenc}
\usepackage[T1]{fontenc}
\usepackage{hyperref}
\usepackage{url}
\usepackage{booktabs}
\usepackage{amsfonts}
\usepackage{nicefrac}
\usepackage{microtype}
\usepackage{xcolor}
\usepackage{amsmath}
\usepackage{amssymb}
\usepackage{underscore}
\usepackage{amsthm}
\usepackage{multirow}
\usepackage{pgfplots}
\pgfplotsset{compat=1.18}
\usepackage{graphicx}
\usepackage{float}
\usepackage{subcaption}
\usepackage{tcolorbox}
\usepackage{wrapfig}
\usepackage{listings}

\newtheorem{theorem}{Theorem}
\newtheorem{lemma}[theorem]{Lemma}
\newtheorem{proposition}{Proposition}
\newtheorem{assumption}{Assumption}

\title{Max-pooling Network Revisited: Analyzing the Role of Semantic Probability in Multiple Instance Learning for Hallucination Detection}

\usepackage{authblk}
\author[1]{Shota Fujikawa\thanks{fj@g.ecc.u-tokyo.ac.jp}}
\author[1]{Issei Sato\thanks{sato@g.ecc.u-tokyo.ac.jp}}
\affil[1]{Department of Computer Science, The University of Tokyo}
\date{}

\begin{document}

\maketitle

\input{sections/Abstract}
\input{sections/Introduction}

\input{sections/Related_work}

\input{sections/Preliminaries}
\input{sections/HaMI}
\input{sections/Revisit}
\input{sections/EXP}
\input{sections/Conclude}
\clearpage
\bibliography{ref}

\appendix

\input{sections/Appendix}

\end{document}

%% file: sections/Abstract.tex
\begin{abstract}
  Hallucination detection has become increasingly important for improving the reliability of large language models (LLMs).
  Recently, hybrid approaches such as HaMI, which combine semantic consistency with internal model states via Multiple Instance Learning (MIL), have achieved state-of-the-art performance. However, these methods incur substantial computational overhead due to repeated sampling and costly semantic similarity computations. In this work, we first provide a theoretical analysis of HaMI in terms of decision margins, revealing that scaling internal states with semantic consistency leads to an enlarged decision margin. Motivated by this insight, we revisit classical sentence classification models from a margin enlargement perspective, aggregating token-level features via max pooling and directly estimating sentence scores using a lightweight MLP. Without requiring semantic consistency computations, our approach achieves substantial efficiency improvements while maintaining competitive performance with state-of-the-art baselines through adaptive aggregation of internal feature representations. Code is available at \url{https://github.com/FUJI1229/Hallucination_Detection}.
\end{abstract}

%% file: sections/Introduction.tex
\section{Introduction}

Despite the strong performance of Large Language Models (LLMs), their practical deployment is hindered by hallucinations \cite{hallucination_survey2,survey}. This issue is particularly critical in high-stakes domains such as healthcare, where incorrect recommendations can have serious consequences. Prior work has shown that models may generate contextually inconsistent or incorrect information \cite{hall}, and may even risk exposing sensitive training data through prompting \cite{extract}. Ensuring output reliability is therefore essential, making hallucination detection a key challenge. \\
Existing approaches to hallucination detection fall into three main categories.  
First, uncertainty-based methods estimate confidence from model probabilities or from the semantic diversity of multiple sampled responses, as in Semantic Entropy \cite{SE},
which accounts for semantic equivalence.  
Second, retrieval-based approaches query external knowledge sources or LLMs to verify factual consistency \cite{gaibu}.  
Third, representation-based methods exploit internal hidden states \cite{internal,h-neuron}, often training classifiers on fixed token positions such as the final token \cite{knowmorethan}.  
A recent hybrid method, HaMI \cite{base}, formulates hallucination detection as a multiple instance learning (MIL) problem and improves robustness through adaptive token selection and semantic-probability weighting.

A major limitation of HaMI is its reliance on external models for estimating semantic uncertainty, which introduces substantial computational overhead and latency. Frequent API calls not only increase economic costs but also raise privacy concerns and hinder real-time deployment. In practical large-scale or interactive settings, minimizing inference latency is therefore critical.

Despite its empirical success, it remains unclear why semantic-probability weighting improves detection performance.
To better understand this issue, we first analyze the mechanism of HaMI from a theoretical perspective. Our analysis shows that weighting by semantic uncertainty effectively increases the classification margin at the logit level. 
Motivated by this observation, we propose a detection framework that operates entirely on the model's internal states. Rather than scaling individual tokens based on external uncertainty signals, our approach enlarges the margin through structured aggregation of token-level features, capturing sentence-level representations more effectively.

Specifically, we revisit the classical sentence classification architecture by incorporating a lightweight feature transformation layer followed by feature-wise max-pooling \cite{max-over-time-pooling}. 
This design is motivated by the view that hallucination-related signals may be sparse across tokens and can be better preserved by max pooling.
Rather than relying on external uncertainty signals, our method as shown in Table~\ref{tab:hami_comparison} operates directly on internal hidden states. Token-level representations are first transformed using an MLP, after which feature-wise max pooling aggregates the most informative signals across tokens. This process leads to a clearer separation between faithful and hallucinated responses while maintaining high computational efficiency.

Our contributions are summarized as follows:
\begin{itemize}
    \item We provide a theoretical analysis of HaMI, showing that semantic-probability weighting improves performance by enlarging the classification margin (Theorem \ref{theorem:hami}).
    \item We analyze max-pooling-based networks from a theoretical perspective, focusing on their effect on margin and generalization (Theorems \ref{thm:global} and \ref{thm:sparse}, and Proposition \ref{prop:rc_bounds}).
    \item We propose a fully self-contained detection framework that achieves comparable performance while being significantly faster (Section \ref{ex:auc} and \ref{ex:efficiency}).
\end{itemize}

\begin{table}[H]
\centering
\small
\setlength{\tabcolsep}{6pt}
\renewcommand{\arraystretch}{1.2}
\caption{Comparison of architectural designs between HaMI and our study.}
\label{tab:hami_comparison}
\begin{tabular}{p{3cm} p{4.5cm} p{4.5cm}}
\toprule
    \textbf{Aspect} & \textbf{HaMI \cite{base}} & \textbf{Ours} \\
\midrule
\textbf{Aggregation} 
& Instance-based
& Embedding-based \\
\textbf{Pooling Method} 
& TopK Pooling
& Max Pooling \cite{max-over-time-pooling} \\
\textbf{Signals} 
& Semantic Metric + Hidden States 
& Hidden States\\

\bottomrule
\end{tabular}
\end{table}

%% file: sections/Related_work.tex
\section{Related Work}
\paragraph{LLM Hallucination and Detection Metrics}
Hallucination analysis reveals that a significant portion of LLM errors stems from factual issues, often due to recall failures where models fail to access internal information without proper context \cite{GPT}. To detect these, Various methods have been proposed, including logit-based indicators \cite{logit} and multi-sampling frameworks such as SelfCheckGPT \cite{selfcheck}, which evaluate informational consistency across multiple responses.

Furthermore, Semantic Entropy (SE) quantifies uncertainty via semantic clustering \cite{SE}, though its high computational cost led to the development of SE Probes, which approximate this process using internal hidden representations instead of explicit semantic consistency computations \cite{sep}.
\paragraph{Internal States and Latent Truthfulness}
Recent probing studies suggest that truthfulness signals are not uniformly scattered throughout the model, but instead are concentrated in specific layers and token positions \cite{internal, knowmorethan}. At the level of individual neurons, prior work identified a small subset of hallucination-related neurons, termed H-Neurons, and showed that manipulating their activations can directly increase or decrease the model's tendency to generate false information \cite{h-neuron}.

\paragraph{Multiple Instance Learning (MIL) for Hallucination Detection}
To establish a theoretical foundation for our approach, we draw on Multiple Instance Learning (MIL) \cite{MIL_survey,MIL}, where labels are assigned to bags of instances. Deep MIL frameworks integrating attention mechanisms have shown strong performance in identifying key instances that dominate decision-making \cite{deepMIL}. 

Building on this perspective, the HaMI framework \cite{base} models hallucination as a local, token-dependent phenomenon rather than a uniform sentence-level property. It formulates detection as a MIL task, adaptively weighting token contributions using uncertainty metrics and internal representations, thereby outperforming fixed aggregation baselines. 

In contrast, our work revisits this aggregation process \cite{max-over-time-pooling} by relying solely on internal representations, eliminating the need for external uncertainty estimation while retaining the ability to capture informative token-level signals.

%% file: sections/Preliminaries.tex
\section{Preliminaries}
\subsection{Problem Settings}

Hallucination refers to the phenomenon in which responses contradict factual reality, typically arising when LLMs lack sufficient internal knowledge or fail to properly use their internal knowledge\cite{GPT}. 

Let $D = \{(q_i, a_i)\}_{i=1}^{N}$ denote a QA dataset, where $q_i$ and $a_i$ represent a question and its ground-truth answer, respectively. Given a question $q$, an LLM produces a response sequence $\mathcal{A} = (t_1, t_2, \dots, t_n)$. 

For each generated token $t_i$, we extract the hidden representation $\mathbf{h}_i \in \mathbb{R}^d$ from an intermediate layer. The factual correctness of the generated response $\mathcal{A}$ is evaluated by an external evaluator LLM, which assigns a binary label $y \in \{-1,1\}$ indicating whether the response is factually correct.

\subsection{Semantic Probability}

Following the Semantic Entropy framework \cite{SE}, we quantify uncertainty by analyzing the semantic diversity of stochastic samples $\mathcal{G} = \{\mathcal{A}_{1}, \dots, \mathcal{A}_{K}\}$ for each question $q$.

\paragraph{Semantic Clustering}
To group linguistically diverse but semantically equivalent responses, we use an external LLM to evaluate the relation $R_{ij}:=R(\mathcal{A}_i, \mathcal{A}_j) \in \{E, C, N\}$ (\textit{Entailment, Contradiction, Neutral}). We define equivalence as:
\begin{equation*}
\mathcal{A}_i \equiv \mathcal{A}_j \iff (R_{ij}, R_{ji}) \in \{(E,E), (E,N), (N,E)\}
\end{equation*}
The set $\mathcal{G}$ is thus partitioned into semantic clusters $\{\mathcal{C}_{1}, \dots, \mathcal{C}_{s}\}$.

\paragraph{Uncertainty Metrics}
A cluster's probability is the normalized sum of its sequences' likelihoods:
\begin{equation*}
P_{\mathcal{C}} = \frac{\sum_{\mathcal{A} \in \mathcal{C}}  P(\mathcal{A} \mid q)}{\sum_{k=1}^K P(\mathcal{A}_k \mid q)}.
\end{equation*}
Based on this, \textbf{Semantic Probability}\cite{base} is defined as:
\begin{equation*}
P^{(i)}_{\text{sem}} = P_{\mathcal{C}(i)}, \quad \text{where } \mathcal{C}(i) \text{ is the cluster containing $\mathcal{A}_i$}.
\end{equation*}

\subsection{Hallucination Detection via Multiple Instance Learning}
We formulate hallucination detection within the Multiple Instance Learning (MIL) framework\cite{MIL_survey,deepMIL,MIL}. A response is treated as a bag $\mathbf{B}=\{\mathbf{h}_{\mathbf{B},1}, \dots, \mathbf{h}_{\mathbf{B},T_\mathbf{B}}\}$ with a label $y_\mathbf{B}$. Let $\mathcal{S}$ be the set of bags in the dataset. We define the subsets of positive and negative samples as follows:
\begin{equation*}
    \mathcal{S}_{pos} = \{ \mathbf{B} \in \mathcal{S} \mid y_{\mathbf{B}} = 1 \}, \quad \mathcal{S}_{neg} = \{ \mathbf{B} \in \mathcal{S} \mid y_{\mathbf{B}} = -1 \}
\end{equation*} To ensure permutation invariance, the scoring function $S( \mathbf{B} )$ must be symmetric.
\paragraph{Aggregation Strategies}
\begin{enumerate}
    \item \textbf{Instance-based Aggregation:} Each token ${\mathbf{h}}_{\mathbf{B},i}$ is first mapped to a scalar $s_{\mathbf{B},i} = f_1({\mathbf{h}}_{\mathbf{B},i})$. The response-level score is then obtained through a pooling operator $g_1$:
    \begin{equation*}
    S(\mathbf{B}) = g_1(\{ f_1({\mathbf{h}}_{\mathbf{B},i}) \}_{i=1}^{T_{\mathbf{B}}})
    \end{equation*}
    \item \textbf{Embedding-based Aggregation:} We first aggregate token-level hidden states into a bag-level embedding by using a pooling operator ${g_2}$. The final score is computed by using scoring function $f_2$ as:
    \begin{equation*}
    S(\mathbf{B}) = f_2(g_2(\{\mathbf{h}_{\mathbf{B},i}\}_{i=1}^{T_\mathbf{B}})).
    \end{equation*}
\end{enumerate}

%% file: sections/HaMI.tex
\section{Analysis of HaMI \cite{base}}
\subsection{HaMI Architecture Formulation}
The HaMI architecture is illustrated in Figure~\ref{fig:hami_arch}. Let a bag be denoted as $\mathbf{B} = \{\mathbf{h}_{\mathbf{B},i}\}_{i=1}^{T_\mathbf{B}}$, where $\mathbf{h}_{\mathbf{B},i} \in \mathbb{R}^d$. Each instance is transformed into a logit $z_{\mathbf{B},i}$ through a neural network $f$:
\begin{equation*}
    z_{\mathbf{B},i} = f(\mathbf{h}_{\mathbf{B},i}) = w^\top \mathrm{ReLU}(\mathrm{BN}(\mathbf{x}_{\mathbf{B},i} + \mathbf{b}_1)) + b_2,
\end{equation*}
where $\mathbf{x}_{\mathbf{B},i} = \mathbf{W}\mathbf{h}_{\mathbf{B},i}$, $\mathrm{BN}(\cdot)$ denotes Batch Normalization, and $\mathbf{b}_1, b_2$ are bias terms.

In HaMI, instance representations are scaled by the semantic probability $P_{\mathrm{sem}}^\mathbf{B}$:
\begin{equation*}
    \tilde{\mathbf{h}}_{\mathbf{B},i} = (1 + \lambda P_{\mathrm{sem}}^\mathbf{B}) \mathbf{h}_{\mathbf{B},i},
\end{equation*}
where $\lambda > 0$. We denote the scaled logit as $\tilde{z}_{\mathbf{B},i} = f(\tilde{\mathbf{h}}_{\mathbf{B},i})$. The bag-level score $S(\mathbf{B})$ is the average of the TopK instance scores $\mathcal{K}_{\mathbf{B}}$ in the bag.

\paragraph{Margin Formulation}
We define the logit-space margin over a dataset of bags $\mathcal{S}$:
\begin{equation*}
M_{\text{hami}}(\lambda) = \frac{1}{|\mathcal{S}|} \sum_{\mathbf{B} \in \mathcal{S}} y_{\mathbf{B}} \tilde{Z}_{\mathbf{B}},
\quad
\tilde{Z}_{\mathbf{B}} = \frac{1}{k} \sum_{i \in \mathcal{K}{\mathbf{B}}} \tilde{z}_{\mathbf{B},i}.
\end{equation*}
We use logit-space margins rather than sigmoid probabilities, since margin analysis concerns signed distance from decision boundary. Although sigmoid is monotone, it is not distance-preserving. For large $|z|$, changes in $z$ yield negligible changes in $\sigma(z)$. Thus, probability space may underestimate margins. We evaluate margins in logit space prior to nonlinear compression by sigmoid.
\subsection{Margin Enhancement via Asymmetric Scaling}
We analyze the effect of the scaling factor $p_\mathbf{B} = \lambda P_{\mathrm{sem}}^\mathbf{B}$. The pre-normalization feature with scaling during the forward pass is $(1+p_\mathbf{B})\mathbf{x}_{\mathbf{B},i} + \mathbf{b}_1$. During inference, the running mean $\mu_j$ and standard deviation $\sigma_j$ are fixed. The $j$-th dimension of the BN output is decomposed as:
\begin{align*}
    \mathrm{BN}((1+p_\mathbf{B})\mathbf{x}_{\mathbf{B},i} + \mathbf{b}_1)_j &= \gamma_j \frac{(1+p_\mathbf{B})x_{\mathbf{B},i,j} + b_{1,j} - \mu_j}{\sigma_j} + \beta_j \\
    &= \mathrm{BN}(\mathbf{x}_{\mathbf{B},i} + \mathbf{b}_1)_j + p_\mathbf{B} \gamma_j \frac{x_{\mathbf{B},i,j}}{\sigma_j}.
\end{align*}

Let $\mathcal{A}_{\mathbf{B},i}$ denote the active set of indices where the ReLU activation is strictly positive. Assuming the active set remains invariant under the scaling $p_\mathbf{B}$, the exact scaled logit $\tilde{z}_{\mathbf{B},i}$ becomes:
\begin{equation*}
    \tilde{z}_{\mathbf{B},i} = \sum_{j \in \mathcal{A}_{\mathbf{B},i}} w_j \left( \mathrm{BN}(\mathbf{x}_{\mathbf{B},i} + \mathbf{b}_1)_j + p_\mathbf{B} \gamma_j \frac{x_{\mathbf{B},i,j}}{\sigma_j} \right) + b_2 = z_{\mathbf{B},i} - p_\mathbf{B} C(\mathbf{x}_{\mathbf{B},i}),
\end{equation*}
where $C(\mathbf{x}_{\mathbf{B},i}) = - \frac{\partial \tilde{z}_{\mathbf{B},i}}{\partial p_\mathbf{B}} = - \sum_{j \in \mathcal{A}_{\mathbf{B},i}} w_j \gamma_j \frac{x_{\mathbf{B},i,j}}{\sigma_j}$.
\\
For clarity, we present the derivation assuming a locally invariant active set. 
However, using the property that ReLU networks are continuous piecewise-affine (CPWA) maps \cite{relu_CPWA}, we can generalize this sensitivity analysis via path integration. 
This generalization ensures that our formulation $Z_B(p_B) = Z_B(0) - p_B \bar{C}_B^{\mathrm{int}}$ remains mathematically rigorous even under dynamic changes in activation patterns, as derived in Appendix~\ref{app:active_non_invariant}.
\\Empirically, we observe the following three properties (Appendix \ref{app:assumption_check}):
\begin{equation}
    \bar{C}_{\mathbf{B}} > 0, \quad \text{where} \quad \bar{C}_{\mathbf{B}} = \frac{1}{k} \sum_{i \in \mathcal{K}_{\mathbf{B}}} C(\mathbf{x}_{\mathbf{B},i})
    \label{E[C_pos]}
\end{equation}
\begin{equation}
    \mathbb{E}_{neg}[P_{\mathrm{sem}}^{\mathbf{B}}] > \mathbb{E}_{pos}[P_{\mathrm{sem}}^{\mathbf{B}}]
    \label{E[P]}
\end{equation}
\begin{equation}
    \mathbb{E}_{neg}[\bar{C}_{\mathbf{B}}] > \mathbb{E}_{pos}[\bar{C}_{\mathbf{B}}].
    \label{E[C]}
\end{equation}
These properties can be understood from the following perspectives:
\begin{itemize}
    \item \textbf{Logit Suppression Effect:} The positivity in Equation \eqref{E[C_pos]} indicates that the scaling operation inherently acts as a penalty. The scaling operation suppresses the overall logits.
    \item \textbf{Semantic Consistency:} Equation \eqref{E[P]} reflects the intuition that models produce more semantically consistent outputs for non-hallucinated instances.
    \item \textbf{Input-Scale Sensitivity:} Driven by the classification loss during training, the model adaptively adjusts $\bar{C}_{\mathbf{B}}$. It learns to increase $\bar{C}_{\mathbf{B}}$ for negative instances to amplify logit suppression, while decreasing it for positive instances to prevent undesired attenuation. This adaptive behavior naturally leads to Equation \eqref{E[C]}.
\end{itemize}

Motivated by these observations, we define the ratio of the expected product of the scaling factor and logit sensitivity between the negative and positive classes as $\gamma$:
\begin{equation*}
\label{eq:gamma_definition}
    \gamma = \frac{\mathbb{E}_{neg}[P_{\mathrm{sem}}^{\mathbf{B}}\bar{C}_{\mathbf{B}}]}
    {\mathbb{E}_{pos}[P_{\mathrm{sem}}^{\mathbf{B}} \bar{C}_{\mathbf{B}}]}.
\end{equation*}
The empirical validation of this property and the observed range of $\gamma$ are provided in Appendix \ref{app:expected_values_summary}.

\begin{theorem}[Margin Enhancement Logic via Semantic Probability Weighting]
\label{theorem:hami}
Weighting the input instances by their semantic probabilities strictly increases the empirical expected margin $\mathbb{E}_{\mathbf{B}}[M_{hami}]$, provided that the dataset class ratio satisfies the following condition:
\begin{equation}
\label{eq:margin_condition}
    \frac{|\mathcal{S}_{neg}|}{|\mathcal{S}_{pos}|} > \frac{1}{\gamma}.
\end{equation}
\end{theorem}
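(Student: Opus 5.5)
The plan is to write the margin as an explicit affine function of $\lambda$ and show that its slope is positive exactly under condition \eqref{eq:margin_condition}.

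First, I will invoke the linearization derived above, which holds either under the invariant-active-set assumption or via the path-integrated form of Appendix~\ref{app:active_non_invariant}. For each bag this gives
\[
\tilde{Z}_{\mathbf{B}} = Z_{\mathbf{B}} - p_{\mathbf{B}} \bar{C}_{\mathbf{B}} = Z_{\mathbf{B}} - \lambda P_{\mathrm{sem}}^{\mathbf{B}} \bar{C}_{\mathbf{B}},
\]
where $Z_{\mathbf{B}}$ is the unscaled TopK average. The TopK set $\mathcal{K}_{\mathbf{B}}$ is held fixed here, as in the definition of $\bar{C}_{\mathbf{B}}$.

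Substituting this into $M_{\text{hami}}$ gives
\[
M_{\text{hami}}(\lambda) = M_{\text{hami}}(0) - \frac{\lambda}{|\mathcal{S}|}\sum_{\mathbf{B}} y_{\mathbf{B}} P_{\mathrm{sem}}^{\mathbf{B}}\bar{C}_{\mathbf{B}}.
\]
I will split the sum by class using $y_{\mathbf{B}}=\pm1$ and replace each class sum by its count times the class mean, written $\mathbb{E}_{pos}$ and $\mathbb{E}_{neg}$. Taking the empirical expectation then yields
\[
\mathbb{E}[M_{\text{hami}}(\lambda)] - \mathbb{E}[M_{\text{hami}}(0)] = \frac{\lambda}{|\mathcal{S}|}\Bigl(|\mathcal{S}_{neg}|\,\mathbb{E}_{neg}[P_{\mathrm{sem}}^{\mathbf{B}}\bar{C}_{\mathbf{B}}] - |\mathcal{S}_{pos}|\,\mathbb{E}_{pos}[P_{\mathrm{sem}}^{\mathbf{B}}\bar{C}_{\mathbf{B}}]\Bigr).
\]

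Second, since $\lambda>0$, the increase is strict if and only if the bracket is positive. I will use \eqref{E[C_pos]} together with $P_{\mathrm{sem}}^{\mathbf{B}}>0$: $P_{\mathrm{sem}}^{\mathbf{B}}$ is a probability mass of a nonempty cluster with positive sequence likelihoods. Together these give $\mathbb{E}_{pos}[P_{\mathrm{sem}}^{\mathbf{B}}\bar{C}_{\mathbf{B}}]>0$, so I may divide by it. The bracket is then positive if and only if $\frac{|\mathcal{S}_{neg}|}{|\mathcal{S}_{pos}|}\gamma > 1$, which is \eqref{eq:margin_condition}.

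Third, I will note that \eqref{E[P]} and \eqref{E[C]} are not needed for the logic of the theorem. They only motivate why $\gamma>1$ is typical, although they do not strictly imply it, since the expectation of a product is not the product of expectations. When $\gamma>1$, the condition is satisfied even under class imbalance favouring positives.

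The main obstacle is justifying the exact per-bag linear relation. This needs the active set to be invariant, or the path-integrated constant $\bar{C}_{\mathbf{B}}^{\mathrm{int}}$ from the appendix substituted in its place. It also needs the TopK index set $\mathcal{K}_{\mathbf{B}}$ to stay fixed under scaling. For the latter, I would first try to argue that if the set changes, TopK-mean is a max over index subsets and hence convex and piecewise linear in $\lambda$, so the same path-integration argument with an averaged sensitivity applies. The rest is bookkeeping with class sums.
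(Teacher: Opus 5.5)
Your proposal is correct and follows the same route as the paper. You substitute the per-bag relation $\Delta Z_{\mathbf{B}} = -\lambda P_{\mathrm{sem}}^{\mathbf{B}}\bar{C}_{\mathbf{B}}$, split the margin change by class, and factor out $\lambda\,\mathbb{E}_{pos}[P_{\mathrm{sem}}^{\mathbf{B}}\bar{C}_{\mathbf{B}}]\,|\mathcal{S}_{pos}|/|\mathcal{S}|$ to obtain the condition $\gamma\,|\mathcal{S}_{neg}|/|\mathcal{S}_{pos}| > 1$; your explicit check that this factor is strictly positive (via \eqref{E[C_pos]} and $P_{\mathrm{sem}}^{\mathbf{B}}>0$), which justifies the ``if and only if'', is a step the paper leaves implicit.
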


\paragraph{Interpretation.} 
Theorem \ref{theorem:hami} formally justifies the mechanism of HaMI. Since the scaling operation inherently suppresses logits, it introduces a trade-off: it expands margins for negative bags while compressing them for positive ones. The parameter $\gamma$ characterizes the relative magnitude of this effect, representing the ratio of expected margin gain in negative bags to the penalty in positive bags. Consequently, Equation \ref{eq:margin_condition} implies that as long as the dataset contains a sufficient proportion of negative bags to leverage this sensitivity differential, the overall expected margin will increase. This widened margin establishes a more robust decision boundary for hallucination detection.

%% file: sections/Revisit.tex
\section{Max Pooling Network Revisited~\cite{max-over-time-pooling}}
HaMI increases the margin by adjusting the token-level scale.
However, since it scores tokens independently, it may not fully capture sentence-level coherence.
To address this limitation, we revisit the classical sentence classification model~\cite{max-over-time-pooling}.
While our formulation is not a direct instantiation, it is inspired by the same underlying intuition.
Our theoretical analysis indicates that such models can effectively enlarge the margin while incorporating holistic sentence-level information.
\subsection{Model Architecture}
The model's architecture is shown in Figure \ref{fig:arch}. It treats each LLM token's hidden state as an instance and computes the hallucination probability for the entire bag.

Let a bag be $\mathbf{B} = \{\mathbf{h}_1, \mathbf{h}_2, \dots, \mathbf{h}_{T_\mathbf{B}}\}$, where $\mathbf{h}_i \in \mathbb{R}^d$ represents the hidden state vector of the $i$-th token. First, we obtain instance features $\mathbf{u}_i \in \mathbb{R}^D$ through a feature extraction layer $f_{\phi}$ that maps the states to a lower-dimensional space:
\begin{align*}
    \mathbf{u}_i = \text{ReLU}(\mathbf{x}_i), \quad \text{where} \quad \mathbf{x}_i = W^\top \mathbf{h}_i
\end{align*}
Next, a max-pooling operation is applied to all instances within the bag, selecting the maximum value for each dimension to extract the bag representation $\mathbf{v} \in \mathbb{R}^D$:
\begin{align*}
v_j = \max_{i=1, \dots, T_{\mathbf{B}}} (u_i)_j \quad \text{for } j = 1, \dots, D.
\end{align*}

The final hallucination score $S(\mathbf{B})$ is calculated by the classification layer as follows:
\begin{align*}
z_\mathbf{B} = w^\top \mathbf{v}, \quad S(\mathbf{B}) = \sigma (z_\mathbf{B}),
\end{align*}
where $\sigma$ denotes the sigmoid function, and $S \in [0, 1]$ represents the predicted probability that the bag contains a hallucination.
To train the model, we employ the logistic loss. For a given set of $N$ responses, let $y_\mathbf{B} \in \{-1, 1\}$ denote the ground-truth label, where $y_\mathbf{B} = 1$ indicates a hallucination. The loss function is defined as:
\begin{align*}
\mathcal{L}_\mathbf{B} = \log \left(1 + \exp \left(-y_\mathbf{B} z_\mathbf{B}(\theta)\right)\right).
\end{align*}
By minimizing this objective function, the model learns to assign higher scores to responses containing hallucinations while maintaining lower scores for factually correct outputs.
Moreover, we analyze the first-order margin-expansion dynamics of this model under logistic-loss updates. Our analysis shows that the Bag-wise margin expansion is governed by the squared gradient norm, which allows us to compare max pooling and mean pooling in sparse MIL regimes.

While it is possible to apply max pooling directly to the raw d-dimensional hidden states, we introduce a feature-extraction layer $f_{\phi}$ to project these states into a more specialized latent space. We provide a comparative analysis of these configurations in a subsequent section.
\subsection{Margin Analysis}
First, we establish that for any model utilizing a pooling operator, the margin expands during training.

\begin{theorem}[Expected Bag-wise Margin Expansion]
\label{thm:global}
Let $\mathcal{M}(\theta) = \mathbb{E}_{\mathbf{B}}[m_{\mathbf{B}}(\theta)]$  be the expected Bag-wise margin. Then, each sampled bag’s own gradient update with a sufficiently small $\eta > 0$ increases the expected Bag-wise margin as follows:
\begin{equation*}
\mathcal{M}(\theta_{t+1}) = \mathcal{M}(\theta_t) + \eta \mathbb{E}_{\mathbf{B}} \left[ \frac{\| \nabla_\theta z_\mathbf{B}(\theta_t) \|^2}{1 + \exp(m_\mathbf{B}(\theta_t))} \right] + O(\eta^2).
\end{equation*}
\end{theorem}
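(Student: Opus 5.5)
The plan is a first-order Taylor expansion of each bag's margin along that bag's own logistic-loss gradient step, followed by averaging over bags. Throughout I take $m_{\mathbf{B}}(\theta)=y_{\mathbf{B}} z_{\mathbf{B}}(\theta)$ and the per-bag update $\theta_{t+1}=\theta_t-\eta\nabla_\theta\mathcal{L}_{\mathbf{B}}(\theta_t)$, which is how I read "each sampled bag's own gradient update." I also assume $z_{\mathbf{B}}$ is differentiable at $\theta_t$, or use a fixed selection of max-pooling argmaxes and ReLU active sets, so that the chain rule holds locally. The argument should be written for a generic symmetric pooling operator $g_2$, since the statement is claimed for any pooling.

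\emph{Step 1 (gradient of the loss).} Since $\mathcal{L}_{\mathbf{B}}=\log(1+\exp(-m_{\mathbf{B}}))$, the chain rule gives
\begin{equation*}
\nabla_\theta\mathcal{L}_{\mathbf{B}}=-\frac{1}{1+\exp(m_{\mathbf{B}})}\,y_{\mathbf{B}}\nabla_\theta z_{\mathbf{B}} .
\end{equation*}
Hence the update is $\Delta\theta=\eta\,y_{\mathbf{B}}\nabla_\theta z_{\mathbf{B}}/(1+e^{m_{\mathbf{B}}})$.

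\emph{Step 2 (Taylor expansion).} Expanding to first order,
\begin{equation*}
m_{\mathbf{B}}(\theta_{t+1})=m_{\mathbf{B}}(\theta_t)+y_{\mathbf{B}}\nabla_\theta z_{\mathbf{B}}^\top\Delta\theta+O(\|\Delta\theta\|^2).
\end{equation*}
Because $y_{\mathbf{B}}^2=1$, the linear term equals $\eta\|\nabla_\theta z_{\mathbf{B}}\|^2/(1+e^{m_{\mathbf{B}}})$. This term is nonnegative, which is the source of the word "increases."

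\emph{Step 3 (averaging).} Take expectation over $\mathbf{B}$ using linearity, which gives exactly the displayed identity for $\mathcal{M}$.

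The main obstacle is the remainder. Two things must be controlled.

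First, the $O(\eta^2)$ term must be uniform over bags so that it survives the expectation. I would assume a bounded Hessian or Lipschitz gradient of $z_{\mathbf{B}}$ in a neighborhood of $\theta_t$, together with bounded gradients, for example via bounded hidden states $\|\mathbf{h}_i\|$. Then $|{\rm remainder}|\le \tfrac{L}{2}\|\Delta\theta\|^2\le \tfrac{L}{2}\eta^2 G^2$.

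Second, $z_{\mathbf{B}}$ is only piecewise smooth because of the max and the ReLU. For $\eta$ small enough, the argmax indices and active sets stay fixed unless $\theta_t$ lies on a measure-zero tie set. On the fixed pieces $z_{\mathbf{B}}$ is polynomial in $(W,w)$, bilinear for this architecture, so the expansion is valid there. This is what "sufficiently small $\eta$" should mean.

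Strict increase then follows whenever the expectation term is positive and $\eta$ is small relative to $L$ and $G$.
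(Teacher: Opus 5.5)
Your proposal is correct and follows the paper's argument. Both use the chain rule $\nabla_\theta\mathcal{L}_{\mathbf{B}}=-y_{\mathbf{B}}\nabla_\theta z_{\mathbf{B}}/(1+e^{m_{\mathbf{B}}})$, expand each bag's margin to first order along its own step with $y_{\mathbf{B}}^2=1$, and then average over bags; the paper phrases the last step as differentiating $\eta\mapsto\mathcal{M}_\eta(\theta)$ at $\eta=0$ and exchanging derivative and expectation, whereas you Taylor-expand per bag and use linearity.

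Your remainder control is more careful than the paper's, which only cites almost-everywhere differentiability of a ``CPWA'' $z_{\mathbf{B}}$ and swaps limit and expectation. You are right that on fixed argmax/active-set pieces $z_{\mathbf{B}}$ is bilinear in $(W,w)$, not affine. Note one remaining caveat: making ``sufficiently small $\eta$'' uniform across bags still requires either a finite bag set or a smoothness assumption like the $\beta$-smoothness condition the paper adopts in its appendix.
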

\paragraph{Gradient Norm Drives Margin Expansion.}
The increase of the margin is governed by $\|\nabla_\theta z_B(\theta)\|^2$.
In particular, larger gradient norms directly lead to larger margin increases.
\label{global_margin}
However, the expansion in Theorem~\ref{thm:global} is local in $\eta$: for a large learning rate, the $O(\eta^2)$ remainder can dominate the positive first-order term. 
In Appendix~\ref{app:beta-smooth-margin}, we show that under a $\beta$-smoothness assumption on the bag-wise margin, this problem can be controlled by an explicit step-size condition, yielding positive bag-wise self-margin expansion for sufficiently small but quantifiable $\eta$.
\paragraph{Motivation: Hallucination Detection as Sparse MIL.}
Hallucination is a sparse MIL problem, typically triggered by a small fraction of erroneous tokens within a vast, factually correct context \cite{span,base}. While mean pooling captures overall semantic context, it theoretically dilutes critical hallucination signals by averaging them with numerous normal tokens \cite{max_vs_mean}, hindering precise error localization. Conversely, max pooling provides a more direct gradient path by routing updates exclusively through the most salient instances \cite{max_vs_mean}, thereby mitigating signal attenuation. To formalize this structural advantage of max pooling in sparse scenarios, we introduce the following assumption.
\begin{assumption}[Sparse MIL Structure] \label{ass:sparse-mil}
For fixed $\theta=(W,w)$, let $\mathbf{g}_{\mathbf{B},i,j} := \mathbf{h}_{\mathbf{B},i}\mathbf{1}\{W_j^\top \mathbf{h}_{\mathbf{B},i}>0\}$. For each bag $\mathbf{B}$, let $S_{\mathbf{B},j} \subseteq \{1,\dots,T_\mathbf{B}\}$ be the set of informative instances ($s_{\mathbf{B},j} := |S_{\mathbf{B},j}|$), $s_\mathbf{B} := \max_j s_{\mathbf{B},j}$, and $J_\mathbf{B} := \{j : s_{\mathbf{B},j} \ge 1\}$. We assume:
\begin{enumerate} \itemsep0pt
    \item \textbf{Sparsity:} $i \notin S_{\mathbf{B},j} \implies u_{\mathbf{B},i,j} = 0$ and $\mathbf{g}_{\mathbf{B},i,j} = \mathbf{0}$.
    \item \textbf{Boundedness:} For all $j \in J_\mathbf{B}$ and $i \in S_{\mathbf{B},j}$, there exist positive constants such that $u_{\mathbf{B},i,j} \in [\underline{u}, \overline{u}]$ and $\|\mathbf{g}_{\mathbf{B},i,j}\|_2 \in [\underline{g}, \overline{g}]$.
\end{enumerate}
\end{assumption}
\begin{theorem}[Gradient Norm Ratio in Sparse MIL]
\label{thm:sparse}
Under Assumption~\ref{ass:sparse-mil} for any positive bag $\mathbf{B}$, the ratio of the gradient norms for max pooling and mean pooling satisfies:
\begin{equation*}
\frac{\|\nabla_\theta z_\mathbf{B}^{\max}\|^2}{\|\nabla_\theta z_\mathbf{B}^{\mathrm{mean}}\|^2} = \Omega\left( \left(\frac{T_\mathbf{B}}{s_\mathbf{B}} \right)^2 \right).
\end{equation*}
\end{theorem}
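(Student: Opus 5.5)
I will compute both gradients explicitly in the parameters $\theta=(W,w)$, using the architecture $z_{\mathbf{B}} = w^\top \mathbf{v}$. The two pooled bag representations are
\begin{equation*}
v^{\max}_j=\max_i u_{\mathbf{B},i,j},\qquad v^{\mathrm{mean}}_j=\tfrac{1}{T_\mathbf{B}}\sum_i u_{\mathbf{B},i,j}.
\end{equation*}
For both poolings, $\nabla_w z_\mathbf{B}=\mathbf{v}$. For each column $W_j$, the max branch gives $\nabla_{W_j} z^{\max}_\mathbf{B}=w_j\,\mathbf{g}_{\mathbf{B},i^*_j,j}$, where $i^*_j$ is the argmax (ties broken by a fixed subgradient choice). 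The mean branch gives
\begin{equation*}
\nabla_{W_j} z^{\mathrm{mean}}_\mathbf{B}=\frac{w_j}{T_\mathbf{B}}\sum_{i}\mathbf{g}_{\mathbf{B},i,j}.
\end{equation*}
The squared norm then splits as $\|\nabla_\theta z\|^2=\|\mathbf{v}\|^2+\sum_j w_j^2\|\nabla_{W_j}\text{-factor}\|^2$, and I will bound each block separately for the two poolings.

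For max pooling I need a lower bound. By the sparsity part of Assumption~\ref{ass:sparse-mil}, $v^{\max}_j=0$ for $j\notin J_\mathbf{B}$. For $j\in J_\mathbf{B}$ we have $v^{\max}_j\ge \underline u$, and the selected $i^*_j$ lies in $S_{\mathbf{B},j}$, so $\|\mathbf{g}_{\mathbf{B},i^*_j,j}\|\ge \underline g$. Hence
\begin{equation*}
\|\nabla_\theta z^{\max}_\mathbf{B}\|^2\ \ge\ \sum_{j\in J_\mathbf{B}}\bigl(\underline u^2+w_j^2\underline g^2\bigr).
\end{equation*}
For mean pooling I need an upper bound. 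Only $s_{\mathbf{B},j}\le s_\mathbf{B}$ terms in each sum are nonzero, so $v^{\mathrm{mean}}_j\le s_\mathbf{B}\overline u/T_\mathbf{B}$. The triangle inequality gives $\|\sum_i\mathbf{g}_{\mathbf{B},i,j}\|\le s_\mathbf{B}\overline g$. Therefore
\begin{equation*}
\|\nabla_\theta z^{\mathrm{mean}}_\mathbf{B}\|^2\ \le\ \Bigl(\frac{s_\mathbf{B}}{T_\mathbf{B}}\Bigr)^2\sum_{j\in J_\mathbf{B}}\bigl(\overline u^2+w_j^2\overline g^2\bigr).
\end{equation*}
Taking the ratio, and using $(\underline u^2+w_j^2\underline g^2)\ge c\,(\overline u^2+w_j^2\overline g^2)$ with $c=\min(\underline u^2/\overline u^2,\underline g^2/\overline g^2)$, gives
\begin{equation*}
\frac{\|\nabla_\theta z^{\max}_\mathbf{B}\|^2}{\|\nabla_\theta z^{\mathrm{mean}}_\mathbf{B}\|^2}\ \ge\ c\,\Bigl(\frac{T_\mathbf{B}}{s_\mathbf{B}}\Bigr)^2.
\end{equation*}
The constant $c$ depends only on the assumption's constants, which is exactly the claimed $\Omega$ statement. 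Positivity of the bag enters only in guaranteeing $J_\mathbf{B}\neq\emptyset$ (the bag contains informative instances), so the denominator is nonzero and the ratio is well defined.

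The main obstacle is the bookkeeping in the parameter blocks. I must check that the termwise comparison via $c$ is legitimate. This holds because both bounds are sums over the same index set $J_\mathbf{B}$ with the same weights $w_j^2$, so no assumption on $w$ is needed. I must also handle the nondifferentiability of max and ReLU at ties by fixing a subgradient selection consistent with the definition of $\mathbf{g}$. If $w_j=0$ for some $j$, that block contributes only through the $\underline u^2$ term, which still keeps the comparison valid.
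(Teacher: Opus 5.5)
Your proof is correct and follows the paper's argument: you split $\|\nabla_\theta z_\mathbf{B}\|^2$ into the $w$-block and the $W_j$-blocks and restrict to $J_\mathbf{B}$ by sparsity. You then lower-bound the max-pooling terms channelwise by $\underline u^2+w_j^2\underline g^2$ and upper-bound the mean-pooling terms by $(s_\mathbf{B}/T_\mathbf{B})^2(\overline u^2+w_j^2\overline g^2)$ via the triangle inequality, exactly as the paper does. The differences are minor: your constant $c=\min(\underline u^2/\overline u^2,\underline g^2/\overline g^2)$ is slightly sharper than the paper's $\min\{\underline u^2,\underline g^2\}/\max\{\overline u^2,\overline g^2\}$, and you explicitly check that the argmax lies in $S_{\mathbf{B},j}$ and that the denominator is nonzero when $J_\mathbf{B}\neq\emptyset$, both of which the paper leaves implicit.
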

\paragraph{Gradient Dynamics and Margin Expansion during Training.}
Theorem ~\ref{thm:global} indicates that the instantaneous growth of the bag-wise margin is proportional to the squared gradient norm $\|\nabla_\theta z_B(\theta)\|^2$, scaled by $(1 + \exp(m_\mathbf{B}(\theta)))^{-1}$. 
At a given margin level, the expansion force is determined by the gradient magnitude. 
In sparse MIL regimes ($s_\mathbf{B} \ll T_\mathbf{B}$), Theorem~\ref{thm:sparse} shows that this force is amplified for max pooling by a factor of $\Omega((T_\mathbf{B}/s_\mathbf{B})^2)$ relative to mean pooling. 
Thus, max pooling effectively overcomes signal dilution from negative instances, leading to faster and larger margin expansion.

\subsection{Rademacher Complexity Analysis}
\label{sec:rademacher}
We analyze Rademacher complexity \cite{rademacher} to compare generalization with a baseline model.
\paragraph{Setup.}
Let $\mathcal{S}=\{ \mathbf{B}_i \}_{i=1}^n$ be a set of bags, where $\mathbf{B}_i = \{h_{i,t}\}_{t=1}^{T_i} \subset \mathbb{R}^d$, $\|h_{i,t}\|_2 \le R$, and $T=\max_i T_i$. 
We compare two max-pooling architectures: the model $\mathcal{F}_{\mathrm{base}}$ that pools raw input features directly, and the model $\mathcal{F}_{\mathrm{feat}}$ that transforms instances into a latent space before pooling.

With ReLU activation $a(x)=\max\{0,x\}$, instance-wise max pooling $\rho$, and weight bounds $\|W_j\|_2 \le B_1$ and $\|w\|_2 \le B_2$, their hypothesis classes are defined as:
\begin{align*}
\mathcal{F}_{\mathrm{feat}} &= \left\{ w^\top \rho\big(\{a(W_j h_t)\}_{j=1,\dots,D;\; t=1,\dots,T}\big) \right\}, \\
\mathcal{F}_{\mathrm{base}} &= \left\{ w^\top a\big(W\, \rho(\mathbf{B})\big) \right\}.
\end{align*}

\begin{proposition}[Rademacher Complexity Bounds]
\label{prop:rc_bounds}
Under the setup above, the empirical Rademacher complexities are bounded as follows:
\begin{align*}
\hat{\mathcal{R}}_S(\mathcal{F}_{\mathrm{feat}}) \;\le\; \frac{2\sqrt{2}\, R B_1 B_2 \sqrt{D T}}{\sqrt{n}}, \label{eq:rc_feat} \quad 
\hat{\mathcal{R}}_S(\mathcal{F}_{\mathrm{base}}) \;\le\; \frac{2 R B_1 B_2 \sqrt{D d}}{\sqrt{n}}. 
\end{align*}
\end{proposition}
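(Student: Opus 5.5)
For both classes the plan is to peel off the outer linear layer by Cauchy--Schwarz and then bound the Rademacher complexity of the inner vector-valued map. Throughout, $\sigma_1,\dots,\sigma_n$ are i.i.d.\ Rademacher variables and $\hat{\mathcal{R}}_S(\mathcal{F}) = \mathbb{E}_\sigma \sup_{f\in\mathcal{F}} \frac1n\sum_i \sigma_i f(\mathbf{B}_i)$.

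\textbf{Base model.} Write $f(\mathbf{B}) = w^\top a(W\rho(\mathbf{B}))$. Cauchy--Schwarz gives
\begin{equation*}
\sup_{w,W} \sum_i \sigma_i w^\top a(W\rho(\mathbf{B}_i)) \le B_2 \sup_W \Big\|\sum_i \sigma_i a(W\rho(\mathbf{B}_i))\Big\|_2 ,
\end{equation*}
and the squared norm splits over the $D$ coordinates. Rather than using a supremum over $j$ (which would give $\sqrt{D}$ but couples the rows), I will bound each coordinate separately. I will use the crude vector-contraction bound $\|\cdot\|_2 \le \sum_j |\cdot_j|$ or, better, Jensen on the square. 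For each row, the functions $\sum_i \sigma_i a(W_j^\top \rho(\mathbf{B}_i))$ are handled by Ledoux--Talagrand contraction, since $a$ is $1$-Lipschitz with $a(0)=0$. Its absolute-value form costs a factor $2$, which explains the constant $2$. This leaves the linear class $\sup_{\|W_j\|\le B_1} \sum_i \sigma_i W_j^\top \rho(\mathbf{B}_i) = B_1\|\sum_i\sigma_i\rho(\mathbf{B}_i)\|_2$.

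The pooled vector $\rho(\mathbf{B}_i) \in \mathbb{R}^d$ takes a coordinatewise max over instances. Each coordinate is bounded by $R$, so $\|\rho(\mathbf{B}_i)\|_2 \le R\sqrt{d}$. This is where the factor $\sqrt d$ enters. Jensen then gives $\mathbb{E}\|\sum_i\sigma_i\rho(\mathbf{B}_i)\|_2 \le R\sqrt{dn}$. Summing the $D$ row contributions in $\ell_2$ (via $\mathbb{E}\sqrt{\sum_j X_j^2}\le\sqrt{\sum_j \mathbb{E}X_j^2}$ after contraction) gives $\sqrt D$. Dividing by $n$ yields $2RB_1B_2\sqrt{Dd}/\sqrt n$.

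\textbf{Feature model.} Again $\hat{\mathcal{R}}_S \le \frac{B_2}{n}\mathbb{E}\sup_W \|\sum_i\sigma_i \mathbf{v}_i\|_2$, where $v_{i,j} = \max_t a(W_j^\top h_{i,t})$. Reduce to per-coordinate suprema across the $D$ channels as before; this gives the $\sqrt D$. For a single channel, the map $(x_1,\dots,x_T)\mapsto \max_t a(x_t)$ is $1$-Lipschitz with respect to $\|\cdot\|_\infty$, hence also with respect to $\|\cdot\|_2$, on $\mathbb{R}^T$. Maurer's vector-contraction inequality then gives
\begin{equation*}
\mathbb{E}\sup_{W_j}\sum_i\sigma_i \max_t a(W_j^\top h_{i,t}) \le \sqrt2\,\mathbb{E}\sup_{W_j}\sum_{i,t}\sigma_{i,t}W_j^\top h_{i,t} \le \sqrt2 B_1 R\sqrt{nT}.
\end{equation*}
The last step applies Jensen to the $nT$ independent Rademacher terms, using $\|h_{i,t}\|\le R$. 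Combining the $\sqrt2$ from Maurer, a factor $2$ from handling the absolute value/norm, $\sqrt D$, and $1/n$ produces $2\sqrt2 RB_1B_2\sqrt{DT}/\sqrt n$. Bags shorter than $T$ can be padded with zero instances; this is harmless because $a(0)=0$ does not change the max when other values are nonnegative.

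\textbf{Main obstacle.} The delicate step is passing from $\sup_W\|\sum_i\sigma_i\mathbf{v}_i\|_2$ to a sum of per-channel suprema with only a $\sqrt D$ loss and the stated constants. The supremum of a norm does not split directly. I would first try the standard trick: bound the norm by $\sqrt{D}\max_j|\cdot|$. Then, since the rows share a common constraint set, the max over $j$ equals a single-row supremum, and the absolute value can be removed at a cost of factor $2$ because the class is symmetric under $W_j\mapsto$ sign flips of the output only up to the ReLU. If that symmetry fails, I would instead use the Golowich--Rakhlin--Shamir style exponential moment argument, or accept the factor $2$ from $\mathbb{E}\sup|X|\le 2\,\mathbb{E}\sup X$ when the class contains $0$ (it does, via $W_j=0$). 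That accounts for the leading constant $2$ in both bounds.
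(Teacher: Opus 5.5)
Your closing paragraph is exactly the paper's proof, and it is correct. After Cauchy--Schwarz, bound $\|x\|_2\le\sqrt{D}\max_j|x_j|$. Coordinate $j$ depends only on $W_j$, and every row has the same constraint $\|W_j\|_2\le B_1$, so the supremum over $W$ of the maximum over $j$ collapses to $\sup_{\|u\|_2\le B_1}|X(u)|$. Here $X(u)=\sum_i\sigma_i\max_t a(u^\top h_{i,t})$ (resp.\ $X(u)=\sum_i\sigma_i a(u^\top\rho(\mathbf{B}_i))$), and no coupling of rows arises. The absolute value costs a factor $2$ because $u=0$ gives $X\equiv 0$ and $\sigma\mapsto-\sigma$ preserves the distribution. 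The sign-flip symmetry of the class that you first mention does not exist, since ReLU outputs are nonnegative, so rely on the fallback you state. Then apply Maurer's inequality to the $1$-Lipschitz map $\max_t a(\cdot)$, resp.\ scalar contraction for $a$, followed by Jensen; this gives both constants exactly. Folding the ReLU into the Lipschitz map is cleaner than the paper's sequential Maurer-then-Ledoux--Talagrand phrasing.

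The route you actually write out first does not go through as stated. You contract first moments $\mathbb{E}\sup_{W_j}|X(W_j)|$ and then combine the rows via $\mathbb{E}\sqrt{\sum_j Y_j^2}\le\sqrt{\sum_j\mathbb{E}Y_j^2}$, with $Y_j=\sup_{\|W_j\|_2\le B_1}|X(W_j)|$. That step requires the second moments $\mathbb{E}Y_j^2$, which the first-moment forms of Ledoux--Talagrand and Maurer do not provide. For $\mathcal{F}_{\mathrm{base}}$ you could repair it with the convex-$F$ version of Ledoux--Talagrand ($F(x)=x^2$, which again gives the constant $2$). Maurer's vector contraction, as usually stated, is only a first-moment bound, so for $\mathcal{F}_{\mathrm{feat}}$ this detour needs an ingredient you have not supplied. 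The detour is also pointless. All $Y_j$ are the same random variable, so $\sup_W\|\cdot\|_2=\sqrt{D}\sup_{\|u\|_2\le B_1}|X(u)|$ holds with equality. Jensen only trades a first moment you can bound for a second moment you cannot, and the crude $\|\cdot\|_2\le\sum_j|\cdot_j|$ would lose $D$ rather than $\sqrt{D}$. Commit to the $\sqrt{D}\max_j$ reduction in both parts. Done that way, your base-model derivation is tidier than the paper's displayed chain. The paper's first inequality omits the $\sqrt{D}$ (it would need $\|w\|_1\le B_2$ rather than $\|w\|_2\le B_2$) and recovers it only by loosening $\sqrt{d}$ to $\sqrt{Dd}$ at the end.
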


\paragraph{Discussion.}
These bounds reveal a critical difference in capacity control. The baseline $\mathcal{F}_{\mathrm{base}}$ scales as $\mathcal{O}(\sqrt{Dd/n})$, suffering from the curse of dimensionality. In contrast, $\mathcal{F}_{\mathrm{feat}}$ scales as $\mathcal{O}(\sqrt{DT/n})$, completely removing the dependence on the input dimension $d$. 
Thus, placing the feature extraction layer before max pooling is crucial for generalization: it replaces $d$ with the maximum bag size $T$, yielding a significantly tighter bound when $T \ll d$.

%% file: sections/EXP.tex
\section{Experiments}
In this section, we present three main findings:

(i) Sections \ref{ex:auc} and \ref{ex:efficiency}: A simple architecture with feature extraction followed by max pooling is competitive with HaMI while achieving over $10{,}000\times$ faster inference.

(ii) Section \ref{ex:significance_feature_layer}: The feature extraction layer is crucial, especially for max pooling.

(iii) Section \ref{ex:margin}: Semantic scaling enlarges the classification margin over the original HaMI, and max pooling consistently yields larger margins than mean pooling.

\subsection{Experimental Setup}
\paragraph{Models and Datasets.}
We employ open-sourced LLMs, LLaMA \cite{llama} and Mistral family \cite{mistral}, and evaluated our method using LLaMA-3.1-8B, Mistral-Nemo-12B, and LLaMA-3.3-70B (4-bit quantized), extracting hidden states with a temperature of $0.5$. We conducted experiments on four benchmarks: TriviaQA~\cite{trivia_qa}, SQuAD~\cite{squad}, Natural Questions (NQ)~\cite{NQ}, and BioASQ~\cite{bioasq}. We used $4,000$ samples for training and $10,000$ for evaluation per dataset except BioASQ, for which we used $2,300$ training samples and $6,000$ evaluation samples due to size limits.  

\paragraph{Labeling and Refinement.}
We used a two-stage evaluation protocol with GPT-5o mini \cite{gpt5mini} serving as LLM-as-a-Judge. Initially, responses were compared against ground-truth answers. To ensure high fidelity, \textbf{only incorrect responses underwent a second re-evaluation} for factual consistency without ground-truth access. Samples with inconsistent labels were discarded. For semantic clustering, GPT-4o mini \cite{gpt4} grouped five independent responses per query to characterize the predictive distribution.

\paragraph{Implementation.}
For each layer, a lightweight MLP detector was trained for $100$ epochs (batch size: $128$), with the best checkpoint selected via validation AUC. After refinement, the final dataset comprised approximately $3,500$ training, $4,000$ validation, and $5,000$ test samples. Data distributions and hyperparameter settings are detailed in Appendix~\ref{app:data_dist} and~\ref{app:set}.
\subsection{Classification Performance}
\label{ex:auc}
Results in Table~\ref{tab:performance_comparison} show that embedding-based methods outperform unweighted HaMI. This advantage stems from aggregating information in a high-dimensional space ($\mathbb{R}^D$), which minimizes information loss and preserves diverse hallucination-related representations better than instance-based methods that compress data into a single dimension before pooling.

Notably, \textbf{max pooling consistently outperforms mean pooling} across all models and datasets. Quantitatively, max pooling achieves an average AUROC improvement of approximately \textbf{$1.5$--$2.5$} points over mean pooling. For instance, on Mistral-Nemo-Instruct (12B) with TriviaQA, max pooling improves the AUROC from $0.929$ to $0.945$. On LLaMA-3.1-8B (NQ), the improvement is even more pronounced, rising from $0.792$ to $0.814$.

Furthermore, our max-pooling method achieves performance comparable to, or in several cases exceeding, \textbf{HaMI (SP)}. Specifically, on the LLaMA-3.3-70B model, max pooling achieves the best results across all datasets. A significant margin is observed on TriviaQA, where it reaches an AUROC of \textbf{$0.945$}, surpassing HaMI (SP) by \textbf{$3.1$} points. This empirical superiority of \textbf{simple max pooling} supports that high-dimensional max-aggregation is effective at capturing hallucination signals.
\begin{table}[H]
\centering
\small
\setlength{\tabcolsep}{4pt}
\renewcommand{\arraystretch}{1.2}
\caption{AUC comparison across methods. Best and second-best results are highlighted in \color{red}red \color{black}and \color{blue}blue\color{black}, respectively. Results are averaged over five runs. Standard errors are reported in Appendix \ref{app:std}.}
\label{tab:performance_comparison}
\resizebox{\textwidth}{!}{
\begin{tabular}{lcccccccccccc}
\toprule
& \multicolumn{4}{c}{\textbf{LLaMA-3.1-8B}}
& \multicolumn{4}{c}{\textbf{Mistral-Nemo-Instruct (12B)}}
& \multicolumn{4}{c}{\textbf{LLaMA-3.3-Instruct-70B}} \\
\cmidrule(lr){2-5} \cmidrule(lr){6-9} \cmidrule(lr){10-13}
& TriviaQA & SQuAD & NQ & BioASQ
& TriviaQA & SQuAD & NQ & BioASQ
& TriviaQA & SQuAD & NQ & BioASQ \\
\midrule

HaMI (Original) 
& 0.866 & 0.795 & 0.777 & 0.862
& 0.917 & 0.840 & 0.830 & 0.873
& 0.907 & 0.841 & 0.862 & 0.888 \\

HaMI (SP) 
& \color{blue}0.923 & \color{red}0.844 & \color{red}0.840 & \color{blue}0.903
& 0.927 & \color{blue}0.858 & \color{red}0.864 & \color{red}0.902
& \color{blue}0.914 & \color{red}0.874 & \color{blue}0.885 & 0.896 \\

\midrule

Mean Pooling 
& 0.906 & 0.817 & 0.792 & 0.892
& \color{blue}0.929 & 0.846 & 0.841 & 0.890
& 0.913 & 0.854 & 0.870 & \color{blue}0.904 \\

\textbf{Max Pooling} 
& \color{red}0.928 & \color{blue}0.829 & \color{blue}0.814 & \color{red}0.904
& \color{red}0.945 & \color{red}0.859 & \color{blue}0.859 & \color{blue}0.897
& \color{red}0.945 & \color{red}0.874 &\color{red} 0.889 & \color{red}0.917 \\

\bottomrule
\end{tabular}
}

\vspace{2pt}

\end{table}

\subsection{Efficiency Comparison}
\label{ex:efficiency}
To evaluate computational efficiency, we compared the inference throughput (samples/sec) of each method using approximately $4,000$ samples. Table~\ref{tab:throughput_comparison} shows that embedding-based mean and max pooling consistently outperform instance-based HaMI across all configurations. Specifically, our max-pooling strategy is up to \textbf{1.6$\times$ faster} than the original HaMI. Furthermore, max pooling is more than \textbf{10,000$\times$ faster} than HaMI (SP) across all datasets. These results highlight that embedding-based aggregation is not only \textbf{highly efficient} but also \textbf{well suited to large-scale hallucination detection}, where high-throughput inference is essential.

\begin{table}[H]
\centering
\small
\setlength{\tabcolsep}{4pt}
\renewcommand{\arraystretch}{1.2}
\caption{Inference throughput comparison (\textbf{Samples/sec}).  The best results are highlighted in \color{red}red\color{black}, and
the second-best results are shown in \color{blue}blue\color{black}.}
\label{tab:throughput_comparison}
\resizebox{\textwidth}{!}{
\begin{tabular}{lcccccccccccc}
\toprule
 & \multicolumn{4}{c}{\textbf{LLaMA-3.1-8B}} 
 & \multicolumn{4}{c}{\textbf{Mistral-Nemo-Instruct (12B)}} 
 & \multicolumn{4}{c}{\textbf{LLaMA-3.3-Instruct-70B}} \\
\cmidrule(lr){2-5} \cmidrule(lr){6-9} \cmidrule(lr){10-13}

\textbf{Method} 
& TriviaQA & SQuAD & NQ & BioASQ
& TriviaQA & SQuAD & NQ & BioASQ
& TriviaQA & SQuAD & NQ & BioASQ \\
\midrule

HaMI (Original)
&3612 &3402 &4096 &4096 
&4025 &3092 &2886 &2378
&2349 &2436 &2315 &2208 \\

HaMI (SP)
&0.17 &0.09 &0.10 &0.17 
&0.28 &0.12 &0.22 &0.13
&0.14 &0.10 &0.18 &0.19\\

Mean Pooling 
& \textcolor{blue}{5192} & \textcolor{blue}{5274} & \textcolor{red}{5224} & \textcolor{blue}{4649}
& \textcolor{blue}{5068} & \textcolor{blue}{4237} & \textcolor{blue}{3706} & \textcolor{red}{3657}
& \textcolor{blue}{2680} & \textcolor{red}{2637} & \textcolor{red}{2625} & \textcolor{red}{2402} \\

\textbf{Max Pooling} 
& \textcolor{red}{5446} & \textcolor{red}{5303} & \textcolor{blue}{4992} & \textcolor{red}{5351}
& \textcolor{red}{5202} & \textcolor{red}{4288} & \textcolor{red}{3720} & \textcolor{blue}{3512}
& \textcolor{red}{2707} & \textcolor{blue}{2585} & \textcolor{blue}{2602} & \textcolor{blue}{2363}  \\

\bottomrule
\end{tabular}
}
\end{table}

\subsection{Significance of the Feature Extraction Layer}
\label{ex:significance_feature_layer}
To verify the necessity of the feature extraction layer \textit{prior} to pooling, we compared our architecture against a baseline that applied pooling directly to the raw hidden states ($\mathbb{R}^d$). In the latter case, the MLP was placed \textit{after} the pooling operation. As shown in Table~\ref{tab:raw_vs_reduced}, placing the feature extraction layer before pooling consistently improves max pooling AUROC by $2$--$4$ points, whereas mean-pooling performance remains largely unchanged regardless of the layer position.

This gain empirically validates the theoretical benefit of the bottleneck $D$ discussed in Section~\ref{sec:rademacher}. Applying max pooling directly to high-dimensional raw states ($\mathbb{R}^d$) makes the model susceptible to the \textbf{curse of dimensionality} ($\mathcal{O}(\sqrt{Dd/n})$), as the pooling operation cannot effectively filter noise in the original feature space. By inserting the feature extraction layer first, we reduce the complexity to $\mathcal{O}(\sqrt{DT/n})$. These results confirm that the feature extraction layer is \textbf{particularly important for max pooling} to transform raw representations into a noise-robust space before aggregation.

\begin{table}[H]
\centering
\small
\setlength{\tabcolsep}{4pt}
\renewcommand{\arraystretch}{1.2}
\caption{AUC comparison based on the input to the pooling operation. We compared applying pooling to \textbf{raw} hidden states versus \textbf{the extracted features}. \textbf{Bold} values indicate the best performance.}
\label{tab:raw_vs_reduced}
\resizebox{\textwidth}{!}{
\begin{tabular}{llcccccccccccc}
\toprule
& & \multicolumn{4}{c}{\textbf{LLaMA-3.1-8B}}
& \multicolumn{4}{c}{\textbf{Mistral-12B}}
& \multicolumn{4}{c}{\textbf{LLaMA-3.3-70B}} \\
\cmidrule(lr){3-6} \cmidrule(lr){7-10} \cmidrule(lr){11-14}

\textbf{Pooling Space} & \textbf{Pooling}
& TriviaQA & SQuAD & NQ & BioASQ
& TriviaQA & SQuAD & NQ & BioASQ
& TriviaQA & SQuAD & NQ & BioASQ \\
\midrule

\textbf{Raw ($d$)}
& Mean  
& 0.907 & 0.817 & 0.792 & 0.893
& 0.929 & 0.846 & 0.841 & 0.889
& 0.912 & 0.855 & 0.871 & 0.903 \\

& Max  
& 0.900 & 0.810 & 0.784 & 0.877
& 0.934 & 0.840 & 0.838 & 0.880
& 0.908 & 0.850 & 0.862 & 0.894 \\

\midrule

\textbf{Extracted Features($D$)} 
& Mean 
& 0.906 & 0.817 & 0.792 & 0.892
& 0.929 & 0.846 & 0.841 & 0.890
& 0.913 & 0.854 & 0.870 & 0.904 \\

& \textbf{Max (Ours)}  
& \textbf{0.928} & \textbf{0.829} & \textbf{0.814} & \textbf{0.904}
& \textbf{0.945} & \textbf{0.859} & \textbf{0.859} & \textbf{0.897}
& \textbf{0.945} & \textbf{0.874} & \textbf{0.889} & \textbf{0.917} \\

\bottomrule
\end{tabular}
}
\end{table}
\subsection{Margin Results}
\label{ex:margin}
We compare margins after 100 training epochs (Table~\ref{tab:margin_comparison}), measured at the layer that achieves the best AUC for each method. Since fundamental structural differences preclude direct numerical comparison between HaMI and embedding-based methods, we focus on margin gains within each category.

\textbf{HaMI (SP)} significantly widens the margin over the original HaMI, with gains of $30$--$60\%$ across most models and a notable \textbf{$144.0\%$} improvement on LLaMA-3.1-8B (SQuAD). This supports our theoretical insight that semantic weighting effectively isolates relevant information from noise.

Similarly, among embedding-based methods, \textbf{max pooling} consistently outperforms mean pooling by $10$--$20\%$, including a \textbf{$20.9\%$} increase on LLaMA-3.1-8B (NQ). These results confirm that max-aggregation better captures salient features for margin enlargement than its mean pooling counterpart.
\begin{table}[H]
\centering
\small
\setlength{\tabcolsep}{4pt}
\caption{Margin Comparison across different methods. Results are averaged over five runs with different random
seeds. Standard errors are reported in Appendix \ref{app:std}}
\label{tab:margin_comparison}
\renewcommand{\arraystretch}{1.2}

\resizebox{\textwidth}{!}{
\begin{tabular}{lcccccccccccc}
\toprule
& \multicolumn{4}{c}{\textbf{LLaMA-3.1-8B}}
& \multicolumn{4}{c}{\textbf{Mistral-Nemo-Instruct (12B)}}
& \multicolumn{4}{c}{\textbf{LLaMA-3.3-Instruct-70B}} \\
\cmidrule(lr){2-5} \cmidrule(lr){6-9} \cmidrule(lr){10-13}
& TriviaQA & SQuAD & NQ & BioASQ
& TriviaQA & SQuAD & NQ & BioASQ
& TriviaQA & SQuAD & NQ & BioASQ \\
\midrule

HaMI (Original) 
& 6.17 & 2.09 & 3.16 & 4.28 
& 5.97 & 3.66 & 3.42 & 4.33 
& 5.12 & 3.67 & 3.56 & 4.06  
 \\

\textbf{HaMI (SP)} 
& \textbf{9.02} & \textbf{5.10} & \textbf{5.58} & \textbf{7.51} 
& \textbf{7.08} & \textbf{5.48} & \textbf{5.25} & \textbf{5.61} 
& \textbf{6.10} & \textbf{5.47} & \textbf{5.23} & \textbf{5.85} 
\\

\midrule

Mean Pooling 
& 2.026 & 0.935 & 0.870 & 1.286 
& 2.994 & 1.197 & 1.046 & 1.614 
& 3.333 & 1.222 & 1.427 & 1.596 
\\

\textbf{Max Pooling} 
& \textbf{2.299} & \textbf{1.073} & \textbf{1.052} & \textbf{1.498} 
& \textbf{3.404} & \textbf{1.420} & \textbf{1.231} & \textbf{1.924} 
& \textbf{3.722} & \textbf{1.366} & \textbf{1.640} & \textbf{1.824} 
\\

\bottomrule
\end{tabular}
}

\end{table}

%% file: sections/Conclude.tex
\section{Conclusion}
In this study, we addressed the significant computational overhead and operational challenges inherent in existing hallucination detection methods, such as HaMI~\cite{base}, which rely on external LLM calls. We proposed an efficient, self-contained framework that leverages the internal feature representations of the generator model.

Our method successfully eliminates substantial API costs, and data privacy risks by utilizing onlyhidden states. Experimental results demonstrate a dramatic speedup of over \textbf{10,000$\times$} compared to the original HaMI, significantly enhancing the feasibility of real-time deployment.

Beyond efficiency, we established a \textbf{rigorous theoretical foundation} for our architecture. We proved that integrating a feature-extraction layer with Max-pooling effectively optimizes generalization bounds by constraining the Rademacher complexity. Furthermore, our analysis revealed that max-aggregation inherently enlarges the decision margin in sparse MIL setting, providing a theoretical justification for its superior classification performance without the need for external weighting.

Empirical evaluations across diverse QA benchmarks confirm that our approach maintains competitive performance with HaMI while offering significant improvements in efficiency. However, our evaluation is limited to QA tasks, and its effectiveness for longer-form generation tasks remains to be explored. Extending the framework to these settings is an important direction for future work.

%% file: sections/appendix.tex
\appendix
\section{Implementation Details}
\subsection{Setup}
\label{app:set}

\paragraph{Model Architecture and Environment.}
All models, including the original HaMI and our proposed embedding-based pooling strategies, use a two-layer MLP architecture. For HaMI, we follow the implementation of \cite{base}, where hidden states are first projected to a 256-dimensional space, followed by batch normalization and a ReLU activation, before producing instance-level logits. For our embedding-based models (mean and max pooling), a similar transformation is applied to each hidden state before aggregating them across the token dimension. All experiments were implemented using PyTorch $2.5.1$ and Transformers $5.5.3$.

\paragraph{Hyperparameters and Training.}
The hyperparameters were tailored for each method to ensure optimal convergence. For \textbf{HaMI}, we used the \textbf{Adam optimizer} with a learning rate of $1 \times 10^{-3}$, a weight decay of $5 \times 10^{-4}$, and a scaling factor $\lambda = 1$. For our \textbf{max-pooling} model, we employed a learning rate of $2 \times 10^{-4}$ and a weight decay of $5 \times 10^{-3}$. The bottleneck dimension $D$ was set to \textbf{256}, and the aggregation threshold $k$ for HaMI was \textbf{10\%}. 

All training was conducted on a single \textbf{NVIDIA A100 GPU} for \textbf{100 epochs} with a batch size of \textbf{128}, selecting the best checkpoint based on the peak validation AUROC. The total training time  excluding data generation and labeling was approximately \textbf{140 seconds} for LLaMA-3.1-8B, \textbf{155 seconds} for Mistral-Nemo-12B, and \textbf{220 seconds} for LLaMA-3.3-70B.
\label{app:setup}
In all experiments, we adopt prompts from \cite{SE}, following the setup of \cite{base}. 
We generated five responses for each QA pair without providing any additional context. 
\paragraph{Prompts}
The generation prompt is shown in \textbf{Prompt 1}.

\begin{tcolorbox}[
    title=Prompt 1: Generation,
    colback=gray!5,
    colframe=gray!60,
    boxsep=1pt,
    left=4pt,      
    right=4pt,
    top=1pt,
    bottom=1pt,
    middle=1pt,
]
\begin{lstlisting}[
    breaklines=true, 
    breakatwhitespace=true, 
    columns=fullflexible, 
    keepspaces=true,
    basicstyle=\ttfamily
]
Answer the following question in a single but complete sentence only.
Question: {question}
Answer:
\end{lstlisting}
\end{tcolorbox}

Next, to assign labels to the generated answers, we prompt GPT-5 mini. 
The labeling prompt is shown in \textbf{Prompt 2}.

\begin{tcolorbox}[
    title=Prompt 2: Labeling,
    colback=gray!5,
    colframe=gray!60,
    boxsep=1pt,
    left=4pt,      
    right=4pt,
    top=1pt,
    bottom=1pt,
    middle=1pt,
]
\begin{lstlisting}[
    breaklines=true, 
    breakatwhitespace=true, 
    columns=fullflexible, 
    keepspaces=true,
    basicstyle=\ttfamily
]
We are assessing the quality of answers to the following question: {question}
The proposed answer is: {predicted_answer}
Based on the context of question and your own knowledge, is the proposed answer correct?
Please think carefully and respond only with yes or no.
Response:
\end{lstlisting}
\end{tcolorbox}

We employ GPT-4o mini as a natural language inference (NLI) judge to assess the semantic relationship between responses. The detailed formulation is shown in \textbf{Prompt 3}.

\begin{tcolorbox}[
    title=Prompt 3: Entailment,
    colback=gray!5,
    colframe=gray!60,
    boxsep=1pt,
    left=4pt,      
    right=4pt,
    top=1pt,
    bottom=1pt,
    middle=1pt,
]

\begin{lstlisting}[
    breaklines=true, 
    breakatwhitespace=true, 
    columns=fullflexible, 
    keepspaces=true,
    basicstyle=\ttfamily
]
Context: Question asked to an AI: '{question}'
Answer A: {text1}
Answer B: {text2}
Determine if Answer A entails Answer B. 
Respond with only one word: entailment,contradiction, or neutral.
\end{lstlisting}
\end{tcolorbox}
\subsection{HaMI Architecture}
The architecture of HaMI is illustrated in Figure~\ref{fig:hami_arch}. 
The model operates on an instance-based MIL framework, where each instance in a bag is first scaled by its semantic probability $P_{\mathrm{sem}}$. 
These scaled representations are then processed through a two-layer MLP $f$ to compute individual instance logits. 
To determine the bag-level score, HaMI employs a TopK pooling strategy, which selects and averages the most discriminative instance scores. 
\begin{figure}[H]
    \centering
    \includegraphics[width=1.0\linewidth]{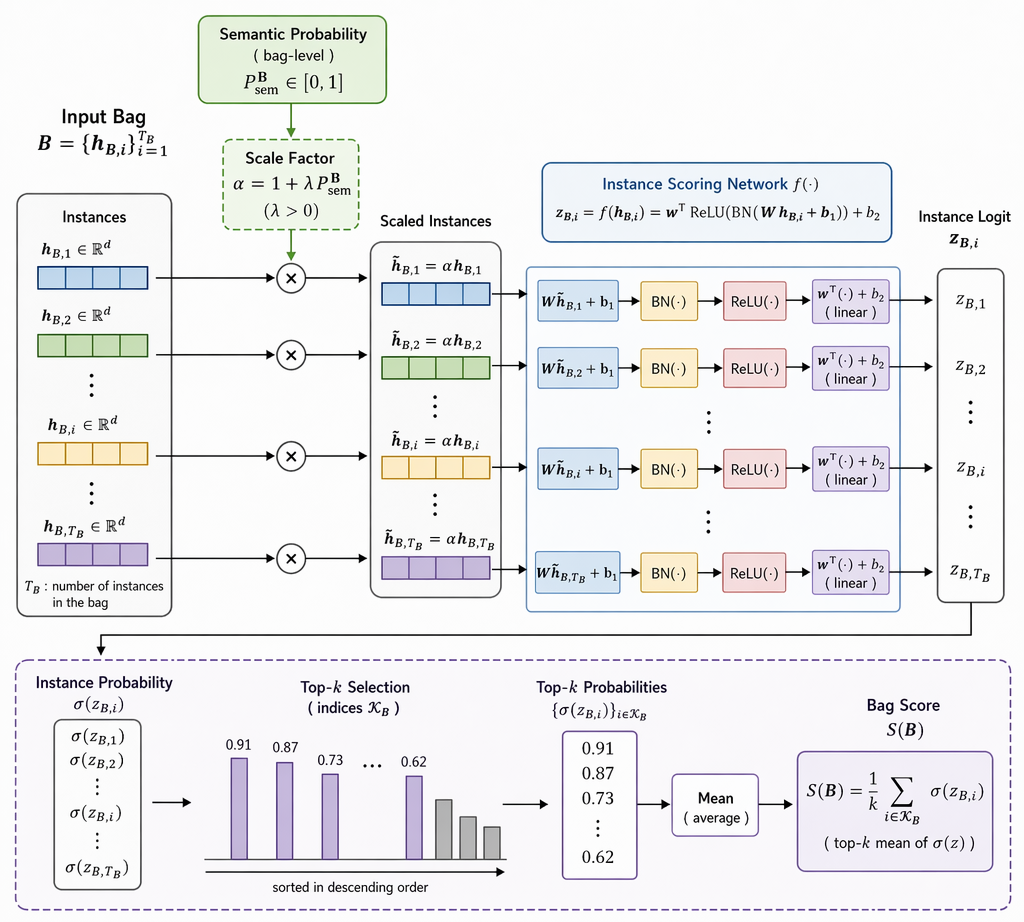}
    \caption{HaMI Architecture}
    \label{fig:hami_arch}
\end{figure}
\clearpage
\subsection{Max Pooling Architecture}
We model each response as a bag of token-level hidden states, as illustrated in Figure~\ref{fig:arch}. Each state is projected into a lower-dimensional space, and max pooling is used to extract a bag representation by selecting the most salient features. A linear classifier then produces a hallucination probability.

The model is trained with a binary classification objective. Theoretically, this design promotes margin expansion during optimization, helping distinguish hallucinated and factual responses. 

\begin{figure}[H]
    \centering
    \includegraphics[width=0.9\linewidth]{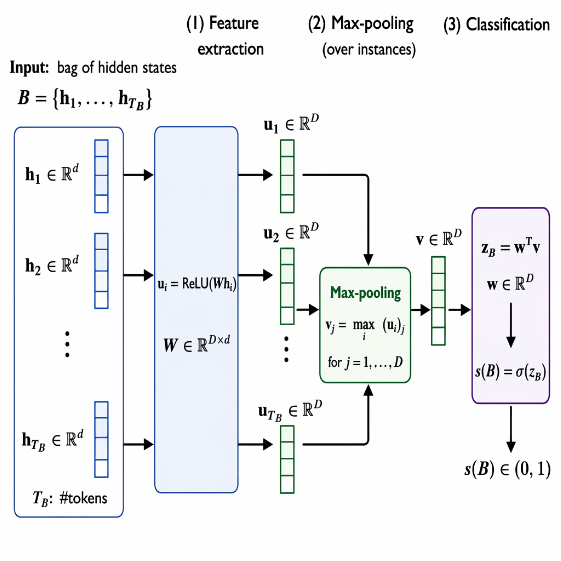}
    \caption{Classical Max Pooling Network}
    \label{fig:arch}
\end{figure}
\clearpage

\section{Proofs}

\subsection{Proof of Theorem \ref{theorem:hami}}
Given the exact decomposition $\tilde{z}_{\mathbf{B},i} = z_{\mathbf{B},i} - \lambda P_{\mathrm{sem}}^\mathbf{B} C(\mathbf{x}_{\mathbf{B},i})$, the change in the bag-level logit, $\Delta Z_{\mathbf{B}} = \tilde{Z}_{\mathbf{B}} - Z_{\mathbf{B}}$, is exactly:
\begin{equation*}
    \Delta Z_{\mathbf{B}} = - \lambda P_{\mathrm{sem}}^\mathbf{B} \bar{C}_{\mathbf{B}}.
\end{equation*}
The total margin change $\Delta M_{hami} = M_{hami}(\lambda) - M_{hami}(0)$ is the sum of contributions from the positive and negative bags:
\begin{align*}
    \Delta M_{hami} &= \frac{1}{|\mathcal{S}|} \left( \sum_{\mathbf{B} \in \mathcal{S}_{pos}} (+1) \Delta Z_{\mathbf{B}} + \sum_{\mathbf{B} \in \mathcal{S}_{neg}} (-1) \Delta Z_{\mathbf{B}} \right) \\
    &= \frac{\lambda}{|\mathcal{S}|} \left( \sum_{\mathbf{B} \in \mathcal{S}_{neg}} P_{\mathrm{sem}}^{\mathbf{B}} \bar{C}_{\mathbf{B}} - \sum_{\mathbf{B} \in \mathcal{S}_{pos}} P_{\mathrm{sem}}^{\mathbf{B}} \bar{C}_{\mathbf{B}} \right).
\end{align*}
By expressing the sums as empirical averages over the respective subsets, the overall expected margin change over the dataset $\mathcal{S}$ is exactly:
\begin{align*}
    &\mathbb{E}_{\mathbf{B} \in \mathcal{S}}[\Delta M_{hami}] \\
    &= \lambda \left( \frac{|\mathcal{S}_{neg}|}{|\mathcal{S}|} \mathbb{E}_{\mathbf{B} \in \mathcal{S}_{neg}}[P_{\mathrm{sem}}^{\mathbf{B}} \bar{C}_{\mathbf{B}}] - \frac{|\mathcal{S}_{pos}|}{|\mathcal{S}|} \mathbb{E}_{\mathbf{B} \in \mathcal{S}_{pos}}[P_{\mathrm{sem}}^{\mathbf{B}} \bar{C}_{\mathbf{B}}] \right)\\
    &=\lambda \mathbb{E}_{\mathbf{B} \in \mathcal{S}_{pos}}[P_{\mathrm{sem}}^{\mathbf{B}} \bar{C}_{\mathbf{B}}]\frac{|\mathcal{S}_{pos}|}{|\mathcal{S}|} \left( \gamma \frac{|\mathcal{S}_{neg}|}{|\mathcal{S}_{pos}| } - 1\right),
\end{align*}
where $\mathbb{E}_{\mathbf{B} \in \mathcal{S}_{neg}}$ and $\mathbb{E}_{\mathbf{B} \in \mathcal{S}_{pos}}$ denote the empirical expectations. From this expression, it follows that the expected margin increases, if and only if:
\begin{equation*}
\frac{|\mathcal{S}_{neg}|}{|\mathcal{S}_{pos}|} > \frac{1}{\gamma}.
\end{equation*}
Thus, asymmetric scaling selectively deepens the negative logits on average, increasing the expected logit-space margin.

\subsection{Proof of Bag-level Margin Dynamics under Logistic Loss}
\begin{lemma}[Bag-level Margin Dynamics]
\label{lem:margin_dynamics} 
For a sampled bag $\mathbf{B}$, the logistic-loss update induced by $B$ produces a positive first-order change in its own signed margin $m_B=y_{\mathbf{B}} z_B$, whenever $\nabla_\theta z_B \neq 0$.
The magnitude of this increase is determined by the squared gradient norm scaled by the current margin's confidence.
\end{lemma}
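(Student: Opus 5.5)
We prove Lemma~\ref{lem:margin_dynamics} with a first-order Taylor expansion of the signed margin along a single gradient step on the logistic loss of bag $\mathbf{B}$.

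\textbf{Step 1: the loss gradient.} Write $\mathcal{L}_\mathbf{B}(\theta)=\log(1+\exp(-m_\mathbf{B}(\theta)))$ with $m_\mathbf{B}=y_\mathbf{B} z_\mathbf{B}$. By the chain rule,
\begin{equation*}
\nabla_\theta \mathcal{L}_\mathbf{B} = -\frac{\exp(-m_\mathbf{B})}{1+\exp(-m_\mathbf{B})}\, y_\mathbf{B} \nabla_\theta z_\mathbf{B} = -\frac{y_\mathbf{B}}{1+\exp(m_\mathbf{B})}\nabla_\theta z_\mathbf{B}.
\end{equation*}
The update $\theta_{t+1}=\theta_t-\eta\nabla_\theta\mathcal{L}_\mathbf{B}(\theta_t)$ therefore moves $\theta$ by $\Delta\theta=\eta\, y_\mathbf{B}(1+\exp(m_\mathbf{B}))^{-1}\nabla_\theta z_\mathbf{B}$.

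\textbf{Step 2: expand the margin.} Expand $m_\mathbf{B}(\theta_t+\Delta\theta)$ to first order. Using $\nabla_\theta m_\mathbf{B}=y_\mathbf{B}\nabla_\theta z_\mathbf{B}$ and $y_\mathbf{B}^2=1$, we get
\begin{equation*}
m_\mathbf{B}(\theta_{t+1}) = m_\mathbf{B}(\theta_t) + \eta\,\frac{\|\nabla_\theta z_\mathbf{B}(\theta_t)\|^2}{1+\exp(m_\mathbf{B}(\theta_t))} + O(\eta^2).
\end{equation*}
The first-order coefficient is strictly positive whenever $\nabla_\theta z_\mathbf{B}\neq 0$. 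Its size is the squared gradient norm times the confidence factor $(1+e^{m_\mathbf{B}})^{-1}$, which is exactly what the lemma asserts.

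\textbf{Step 3: justify the remainder (main obstacle).} The network uses ReLU and max pooling, so $z_\mathbf{B}$ is only continuous piecewise-affine in $\theta$ and not globally $C^2$. We handle this in two parts.
\begin{itemize}
\item At a generic $\theta_t$, meaning no ReLU is exactly at $0$ and there are no ties in the argmax of any pooled coordinate, the activation pattern and the argmax indices are locally constant. On that neighborhood $z_\mathbf{B}$ is a polynomial in $(W,w)$: bilinear, of the form $w^\top$ times a linear function of $W$. It is therefore smooth there.
\item For $\eta$ small enough, $\theta_{t+1}$ stays in the same region, so the Taylor remainder is a genuine $O(\eta^2)$. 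Its constant is governed by the local Hessian bound and $\|\nabla_\theta z_\mathbf{B}\|^2$.
\end{itemize}
At nongeneric points, which form a measure-zero set, we can either use the one-sided directional derivative along $\Delta\theta$ or defer to the $\beta$-smoothness treatment of Appendix~\ref{app:beta-smooth-margin}.

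\textbf{Consequence for Theorem~\ref{thm:global}.} Taking expectation over the sampled bag, where each bag uses its own update, yields the bound of Theorem~\ref{thm:global} directly.
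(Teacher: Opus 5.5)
Your proposal is correct and follows the paper's proof: Steps 1--2 are exactly its argument (logistic-loss gradient, SGD step, first-order Taylor expansion, multiplication by $y_{\mathbf{B}}$), and Step 3 adds a justification of the $O(\eta^2)$ remainder that the paper only asserts, while the paper additionally decomposes $\|\nabla_\theta z_{\mathbf{B}}\|^2$ channel-wise for max and mean pooling for later use in Theorem~\ref{thm:sparse}. One small wording fix: in $\theta=(W,w)$ jointly, $z_{\mathbf{B}}$ is piecewise \emph{bilinear} rather than piecewise affine, as your own second bullet correctly states.
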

This lemma characterizes the local fitness of the model to the bag $B$. It shows that the logistic loss naturally induces a gradient flow that pushes the model's parameters in a direction that enlarges the margin of the current sample. Importantly, it establishes that the magnitude of this margin increase is directly proportional to the squared norm of the gradient.
\paragraph{Proof.}
\label{app:proof_margin_dynamics}
We first derive the first-order change in the bag-level margin. Let $\mathcal{L}_\mathbf{B}(\theta) = \log(1 + \exp(-y_\mathbf{B} z_\mathbf{B}(\theta)))$ be the logistic loss for a bag $B$. Its gradient with respect to $\theta = \{W, w\}$ is:
\begin{align*}
\nabla_\theta \mathcal{L}_\mathbf{B}(\theta) = \frac{\partial \mathcal{L}_\mathbf{B}}{\partial z_\mathbf{B}} \nabla_\theta z_\mathbf{B}(\theta) = -\frac{y_\mathbf{B}}{1 + \exp(y_\mathbf{B} z_\mathbf{B}(\theta))} \nabla_\theta z_\mathbf{B}(\theta).
\end{align*}
Using the SGD update rule $\theta_{t+1} = \theta_t - \eta \nabla_\theta \mathcal{L}_\mathbf{B}(\theta_t)$, the parameter shift is:
\begin{align*}
\theta_{t+1} - \theta_t = \eta \frac{y_\mathbf{B}}{1 + \exp(m_\mathbf{B}(\theta_t))} \nabla_\theta z_\mathbf{B}(\theta_t),
\end{align*}
where $m_\mathbf{B}(\theta) = y_\mathbf{B} z_\mathbf{B}(\theta)$. For a sufficiently small learning rate $\eta$, the first-order Taylor expansion of $z_\mathbf{B}(\theta_{t+1})$ around $\theta_t$ yields:
\begin{align*}
z_\mathbf{B}(\theta_{t+1}) &= z_\mathbf{B}(\theta_t) + \nabla_\theta z_\mathbf{B}(\theta_t)^\top (\theta_{t+1} - \theta_t) + O(\eta^2) \nonumber \\
&= z_\mathbf{B}(\theta_t) + \eta \frac{y_\mathbf{B}}{1 + \exp(m_\mathbf{B}(\theta_t))} \| \nabla_\theta z_\mathbf{B}(\theta_t) \|^2 + O(\eta^2).
\end{align*}
Multiplying both sides by $y_\mathbf{B}$ and using $y_\mathbf{B}^2 = 1$, we obtain the margin increase:
\begin{align*}
\Delta m_\mathbf{B}(\theta_t) = \eta \frac{\| \nabla_\theta z_\mathbf{B}(\theta_t) \|^2}{1 + \exp(m_\mathbf{B}(\theta_t))} + O(\eta^2).
\end{align*}
Since $\eta > 0$ and $\| \nabla_\theta z_\mathbf{B} \|^2 \ge 0$, the margin increases as long as the gradient is non-zero.

The squared gradient norm $\| \nabla_\theta z_\mathbf{B} \|^2$ can be decomposed into contributions from the classification weights $w$ and feature extraction weights $W$. Since $\theta$ consists of $\{w_j, W_j\}_{j=1}^D$, we have:
\begin{align*}
\| \nabla_\theta z_\mathbf{B} \|^2 = \sum_{j=1}^D \left( \left| \frac{\partial z_\mathbf{B}}{\partial w_j} \right|^2 + \left\| \nabla_{W_j} z_\mathbf{B} \right\|_2^2 \right) = \sum_{j=1}^D \gamma_{B,j}.
\end{align*}

For \textbf{Max Pooling}, where $z_{B,j} = \max_i a(W_j^\top h_{B,i})$, the output depends only on the maximally activated instance $i_{B,j}^*$. The derivatives are:
\begin{align*}
\frac{\partial z_B}{\partial w_j} = u_{B,i_{B,j}^*, j}, \quad \frac{\partial z_B}{\partial W_j}
= w_j \, h_{B,i^*_{B,j}} \, \mathbf{1}\{W_j^\top h_{B,i^*_{B,j}} > 0\}.
\end{align*}
Thus, $\gamma_{B,j}^{\max} = u_{B,i_{B,j}^*, j}^2 + w_j^2 \mathbf{1}\left\{W_j h_{B, i_B^*(j)}>0\right\}\left\|h_{B, i_B^*(j)}\right\|^2$.

For \textbf{mean Pooling}, where $z_{B,j} = \frac{1}{T_{\mathbf{B}}} \sum_{i=1}^{T_{\mathbf{B}}} a(W_j^\top h_{B,i})$, the derivatives are averaged over the bag:
\begin{align*}
\frac{\partial z_B}{\partial w_j} = \frac{1}{T_{\mathbf{B}}} \sum_{i=1}^{T_{\mathbf{B}}} u_{B,i,j}, \quad
\frac{\partial z_B}{\partial W_j}
=\frac{w_j}{T_{\mathbf{B}}} \sum_{i=1}^{T_{\mathbf{B}}}
h_{B,i} \, \mathbf{1}\{W_j^\top h_{B,i} > 0\}.
\end{align*}
This yields 
$\gamma_{B,j}^{\mathrm{mean}}
= \left( \frac{1}{T_{\mathbf{B}}} \sum_{i=1}^{T_{\mathbf{B}}} u_{B,i,j} \right)^2
+ \frac{w_j^2}{T_{\mathbf{B}}^2}
\left\|
\sum_{i=1}^{T_{\mathbf{B}}}
h_{B,i}\,\mathbf{1}\{W_j^\top h_{B,i} > 0\}
\right\|_2^2.
$

Combining these with the margin update equation completes the proof.
\subsection{Proof of Theorem \ref{thm:global}}
\label{app:proof_global_margin}
Let $\mathcal{M}_\eta(\theta) := \mathbb{E}_{\mathbf{B} \sim \mathcal{D}} [ m_\mathbf{B}(\theta - \eta \nabla_\theta \mathcal{L}_\mathbf{B}(\theta)) ]$. For a fixed bag, let $\theta_\eta := \theta - \eta \nabla_\theta \mathcal{L}_\mathbf{B}(\theta)$. 
\\
$z_\mathbf{B}(\theta)$ is continuous piecewise-affine, and is therefore differentiable almost everywhere with respect to the parameters $\theta$.\\
The derivative with respect to $\eta$ at $\eta=0$ is:
\begin{align*}
\left. \frac{\mathrm{d}}{\mathrm{d}\eta} m_\mathbf{B}(\theta_\eta) \right|_{\eta=0} = - \nabla_\theta m_\mathbf{B}(\theta)^{\top} \nabla_\theta \mathcal{L}_\mathbf{B}(\theta).
\end{align*}
Substituting the gradient $\nabla_\theta \mathcal{L}_\mathbf{B}(\theta) = -\frac{y_\mathbf{B}}{1 + \exp(m_\mathbf{B}(\theta))} \nabla_\theta z_\mathbf{B}(\theta)$:
\begin{align*}
- \nabla_\theta m_\mathbf{B}(\theta)^{\top} \nabla_\theta \mathcal{L}_\mathbf{B}(\theta) &= -(y_\mathbf{B} \nabla_\theta z_\mathbf{B}(\theta))^{\top} \left( -\frac{y_\mathbf{B}}{1 + \exp(m_\mathbf{B}(\theta))} \nabla_\theta z_\mathbf{B}(\theta) \right) \nonumber \\
&= \frac{\| \nabla_\theta z_\mathbf{B}(\theta) \|^2}{1 + \exp(m_\mathbf{B}(\theta))}.
\end{align*}
By interchanging the derivative and expectation:
\begin{align*}
\left. \frac{\mathrm{d}}{\mathrm{d}\eta} \mathcal{M}_\eta(\theta) \right|_{\eta=0} = \mathbb{E}_\mathbf{B} \left[ \frac{\| \nabla_\theta z_\mathbf{B}(\theta) \|^2}{1 + \exp(m_\mathbf{B}(\theta))} \right] \ge 0.
\end{align*}
The Taylor expansion around $\eta=0$ gives:
\begin{align*}
\mathcal{M}_\eta(\theta) = \mathcal{M}_0(\theta) + \eta \mathbb{E}_\mathbf{B} \left[ \frac{\| \nabla_\theta z_\mathbf{B}(\theta) \|^2}{1 + \exp(m_\mathbf{B}(\theta))} \right] + O(\eta^2).
\end{align*}
For sufficiently small $\eta > 0$, $\mathcal{M}_\eta(\theta) \ge \mathcal{M}_0(\theta)$, confirming that a single SGD update increases the expected bag-wise margin.
\subsection{Proof of Theorem~\ref{thm:sparse}}
Let $J_\mathbf{B} := \{j : s_{\mathbf{B},j}\ge 1\}$ be the set of active channels. For $j \notin J_\mathbf{B}$, the gradient contribution $\gamma_{\mathbf{B},j}$ vanishes by Assumption~\ref{ass:sparse-mil}. Thus, the squared norms decompose as $\sum_{j\in J_\mathbf{B}} \gamma_{\mathbf{B},j}$.

\paragraph{1. Max Pooling Lower Bound.}
For $j \in J_\mathbf{B}$, let $i^* \in S_{\mathbf{B},j}$ be the maximizing instance. By Assumption~\ref{ass:sparse-mil}, we have:
$$
\gamma_{\mathbf{B},j}^{\max} = u_{\mathbf{B},i^*,j}^2 + w_j^2\|\mathbf{g}_{\mathbf{B},i^*,j}\|_2^2 \ge c_1(1+w_j^2),
$$
where $c_1 := \min\{\underline{u}^2,\underline{g}^2\}$. Summing over $J_\mathbf{B}$ yields $\|\nabla_\theta z_\mathbf{B}^{\max}\|^2 \ge c_1 \sum_{j\in J_\mathbf{B}}(1+w_j^2)$.

\paragraph{2. Mean Pooling Upper Bound.}
For mean pooling, since only $i \in S_{\mathbf{B},j}$ are non-zero ($s_{\mathbf{B},j} \le s_\mathbf{B}$), the triangle inequality and Assumption~\ref{ass:sparse-mil} give:
$$
\gamma_{\mathbf{B},j}^{\mathrm{mean}} = \left(\frac{\sum_{i \in S_{\mathbf{B},j}} u_{i,j}}{T_\mathbf{B}}\right)^2 + \frac{w_j^2}{T_\mathbf{B}^2} \left\| \sum_{i \in S_{\mathbf{B},j}} \mathbf{g}_{i,j} \right\|_2^2 \le \frac{s_\mathbf{B}^2}{T_\mathbf{B}^2} c_2 (1+w_j^2),
$$
where $c_2 := \max\{\overline{u}^2,\overline{g}^2\}$. Thus, $\|\nabla_\theta z_\mathbf{B}^{\mathrm{mean}}\|^2 \le c_2 (\frac{s_\mathbf{B}}{T_\mathbf{B}})^2 \sum_{j\in J_\mathbf{B}}(1+w_j^2)$.

\paragraph{3. Conclusion.}
Combining these bounds, we obtain:
$$
\frac{\|\nabla_\theta z_\mathbf{B}^{\max}\|^2}{\|\nabla_\theta z_\mathbf{B}^{\mathrm{mean}}\|^2} \ge \frac{c_1}{c_2} \left(\frac{T_\mathbf{B}}{s_\mathbf{B}}\right)^2 = \Omega\left(\left(\frac{T_\mathbf{B}}{s_\mathbf{B}}\right)^2\right).
$$
\subsection{Proof of Proposition \ref{prop:rc_bounds}}
\label{app:proof_proposition_1}
We provide the derivations for the Rademacher complexity bounds of both the feature-extraction-based model ($\mathcal{F}_{\mathrm{feat}}$) and the baseline model ($\mathcal{F}_{\mathrm{base}}$).

\paragraph{1. Bound for the Model with Feature Extraction Layer ($\mathcal{F}_{\mathrm{feat}}$)}
We first bound the empirical Rademacher complexity of the hypothesis class:
\[
\mathcal{F}_{\mathrm{feat}} = \left\{ f_\theta(\mathbf{B}) = w^\top \rho\big(\{a(W_j h_t)\}_{j,t}\big) \;\middle|\; \|W_j\|_2 \le B_1, \|w\|_2 \le B_2 \right\}.
\]

Applying the Cauchy-Schwarz inequality to separate the classification weights $w$, we have:
\begin{equation*}
    \hat{\mathcal{R}}_S(\mathcal{F}_{\mathrm{feat}}) \le \frac{B_2}{n} \mathbb{E}_\sigma \left[ \sup_{\|W_j\|_2 \le B_1} \left\| \sum_{i=1}^{n} \sigma_i \rho (\mathbf{B}_i) \right\|_2 \right].
\end{equation*}
Since the norm constraint $\|W_j\|_2 \le B_1$ applies independently to each row $j \in \{1, \dots, D\}$, the supremum of the $L_2$ norm factors into independent suprema across the $D$ dimensions:
\begin{equation*}
    \sup_{W} \left\| \sum_{i=1}^{n} \sigma_i \rho(\mathbf{B}_i) \right\|_2 \le \sqrt{D} \sup_{\|w_1\|_2 \le B_1} \left| \sum_{i=1}^{n} \sigma_i \max_{1 \le t \le T_i} a(w_1 h_{i,t}) \right|.
\end{equation*}
By introducing a factor of 2 to handle the absolute value, and sequentially applying Maurer's vector contraction inequality for the max operator which introduces independent Rademacher variables $\sigma_{i,t} \in \{-1, +1\}$ for each instance $t$ and the Ledoux-Talagrand contraction inequality for the ReLU function $a(\cdot)$, we obtain:
\begin{align*}
    \hat{\mathcal{R}}_S(\mathcal{F}_{\mathrm{feat}}) &\le \frac{2 \sqrt{2} B_2 \sqrt{D}}{n} \mathbb{E}_\sigma \left[ \sup_{\|w_1\|_2 \le B_1} \sum_{i=1}^{n} \sum_{t=1}^{T_i} \sigma_{i,t} w_1 h_{i,t} \right] \nonumber \\
    &\le \frac{2 \sqrt{2} B_1 B_2 \sqrt{D}}{n} \mathbb{E}_\sigma \left[ \left\| \sum_{i=1}^{n} \sum_{t=1}^{T_i} \sigma_{i,t} h_{i,t} \right\|_2 \right] \nonumber \\
    &\le \frac{2 \sqrt{2} R B_1 B_2 \sqrt{DT}}{\sqrt{n}},
\end{align*}
where $T = \max_i T_i$. This confirms the $\mathcal{O}(\sqrt{DT/n})$ scaling, where the dependence on the input dimension $d$ is removed.

\paragraph{2. Bound for the Baseline Model ($\mathcal{F}_{\mathrm{base}}$)}
For the baseline class $\mathcal{F}_{\text{base}}$, max pooling $\rho(\mathbf{B}_i)$ is performed directly on the $d$-dimensional input space. Let $z_i = \rho(\mathbf{B}_i) \in \mathbb{R}^d$. Under the assumption $\|h_{i,t}\|_2 \le R$, which implies $|h_{i,t,\ell}| \le R$ for each coordinate, the norm of the pooled vector is bounded by:
\[
\|z_i\|_2 = \sqrt{\sum_{k=1}^d (\max_{t} |h_{i,t,k}|)^2} \le \sqrt{d R^2} = R\sqrt{d}.
\]
Treating this as a standard two-layer neural network with weight constraints $B_1, B_2$ acting on $\{z_i\}$, we apply standard Rademacher complexity results:
\begin{align*}
    \hat{\mathcal{R}}_S(\mathcal{F}_{\mathrm{base}}) &\le \frac{2 B_1 B_2}{n} \mathbb{E}_\sigma \left\| \sum_{i=1}^n \sigma_i z_i \right\|_2 \nonumber \\
    &\le \frac{2 B_1 B_2}{\sqrt{n}} \max_i \|z_i\|_2 \le \frac{2 R B_1 B_2 \sqrt{Dd}}{\sqrt{n}}.
\end{align*}
This result illustrates that without the feature extraction bottleneck, the complexity explicitly depends on the ambient dimension $d$, suffering from the curse of dimensionality.
\section{Generalization to Non-invariant Active Set in HaMI}
\label{app:active_non_invariant}
In the analysis of ReLU networks, the fact that input perturbations cause changes in activation patterns poses a significant challenge. In this section, we show that the entire network is a Continuous Piecewise-Affine (CPWA) map, mathematically guaranteeing the validity of sensitivity analysis through path integration.

\subsection{Properties and Compositions of Piecewise-Affine Maps}
Here, we summarize the properties regarding the composition of piecewise-affine maps defined on a finite number of polyhedral regions.

\paragraph{Definition: Piecewise-Affine Map}
A function $F: \mathbb{R}^d \rightarrow \mathbb{R}^m$ is defined as piecewise-affine if there exists a partition of the input space $R_1, \ldots, R_M$, such that on each region $R_i$, it can be described using $A_i \in \mathbb{R}^{m \times d}$ and $b_i \in \mathbb{R}^m$ as follows:
\begin{align*}
F(x) = A_i x + b_i \quad (\forall x \in R_i).
\end{align*}

\paragraph{Primary Properties of Composition}
The following properties hold for the composition of affine maps and piecewise-affine maps. Furthermore, since the composition of continuous functions is continuous, if each component is continuous, the composite function is also a continuous piecewise-affine (CPWA) map.

\begin{enumerate}
    \item \textbf{(Affine) $\circ$ (Affine)} is an affine map.
    \item \textbf{(Affine) $\circ$ (PWA)} is a piecewise-affine map.
    \item \textbf{(PWA) $\circ$ (Affine)} is a piecewise-affine map.
    \item \textbf{(PWA) $\circ$ (PWA)} is a piecewise-affine map.
\end{enumerate}

\paragraph{Mathematical Proofs}
\begin{itemize}
    \item \textbf{Proof of 1:} Let $H(x) := Ax + b$ and $G(y) := Cy + d$. Then $(G \circ H)(x) = C(Ax+b)+d = (CA)x + (Cb+d)$, which is an affine map.
    \item \textbf{Proof of 2:} Let $F$ be an affine map $A_i x + b_i$ on region $R_i$. Then $(G \circ F)(x) = C(A_i x + b_i) + d = (CA_i)x + (Cb_i + d)$, making it a PWA map on the same region $R_i$.
    \item \textbf{Proof of 3:} Let $G$ be an affine map $C_j y + d_j$ on region $S_j$. The inverse image $T_j = H^{-1}(S_j)$ is also a polyhedral region, and on each $T_j$, the map becomes $(C_j A)x + (C_j b + d_j)$, thus it is a PWA map.
    \item \textbf{Proof of 4:} Let the regions for $F$ and $G$ be $R_i$ and $S_j$, respectively. Defining the region where the composite function is affine as $T_{ij} = R_i \cap \{x : A_i x + b_i \in S_j\}$, this forms a polyhedral region as the intersection of polyhedral regions. On each $T_{ij}$, it can be written as $C_j(A_i x + b_i) + d_j$, meaning it is a PWA map.
\end{itemize}

\subsection{Analysis of Continuous Piecewise-Affine Properties in ReLU Networks \cite{relu_CPWA}}
A function $F: \mathbb{R}^d \rightarrow \mathbb{R}^m$ is defined as \textbf{continuous piecewise-affine (CPWA)} if the input space is partitioned into a finite number of polyhedral regions $R_1, \ldots, R_L$, and on each region $R_\ell$, it can be described as an affine map $F(x) = A_\ell x + b_\ell$ using $A_\ell \in \mathbb{R}^{m \times d}$ and $b_\ell \in \mathbb{R}^m$.

In a ReLU network, each region $R_\ell$ corresponds to a region where the \textbf{active pattern} of all ReLU units is fixed. That is, within a certain region, the On/Off state of each unit is invariant, and the ReLU can be viewed as a linear transformation $D u$ using a diagonal mask matrix $D = \operatorname{diag}(\mathbb{1}_{\{u > 0\}})$ \cite{relu_oresen}. For example, in a two-layer ReLU network $F(x) = A_2 \operatorname{ReLU}(A_1 x + b_1) + b_2$, since $D_1 = \operatorname{diag}(\mathbb{1}_{\{A_1 x + b_1 > 0\}})$ is fixed within a specific activation region,
\begin{align*}
F(x) = A_2 D_1 (A_1 x + b_1) + b_2 = A_2 D_1 A_1 x + (A_2 D_1 b_1 + b_2)
\end{align*}
which reduces to an affine map $A'x + c$.

Generalizing this to an $L$-layer deep ReLU network, let the output of each layer be $h_0 = x$, $h_\ell = \operatorname{ReLU}(A_\ell h_{\ell-1} + b_\ell)$ for $\ell=1, \ldots, L-1$, and the final output be $F(x) = A_L h_{L-1} + b_L$. Within a region where a certain activation pattern $D = (D_1, \ldots, D_{L-1})$ is fixed, it can be written as follows:
\begin{align*}
F(x) = \left( A_L D_{L-1} A_{L-1} \cdots D_1 A_1 \right) x + c(D)
\end{align*}
Here, $c(D)$ is a constant term that depends only on the activation pattern and is independent of the input $x$ within the region:
\begin{align*}
c(D) = b_L + A_L D_{L-1} b_{L-1} + A_L D_{L-1} A_{L-1} D_{L-2} b_{L-2} + \cdots + A_L D_{L-1} \cdots A_2 D_1 b_1.
\end{align*}
Thus, a ReLU network partitions the input space into numerous polyhedral regions and possesses the structure of a CPWA function that behaves locally linearly within each region.

\subsection{Sensitivity Analysis via Path Integration of Continuous Piecewise-Affine Functions \cite{path_integration}}
Let the function $F: \mathbb{R}^d \rightarrow \mathbb{R}$ be a continuous piecewise-affine (CPWA) function described as $F(x) = a_m^{\top} x + b_m$ on a finite number of polyhedral regions $R_1, \ldots, R_M$. As discussed previously, deep ReLU networks also reduce to this form. In this subsection, we derive the change in output for a finite perturbation from an input $x_0 \in \mathbb{R}^d$ to $x_1 = x_0 + \Delta x$ as the integral of local sensitivities along a path.

First, we define a linear path $\gamma(t) := x_0 + t \Delta x$ (where $t \in [0, 1]$) connecting the two points, and consider a single-variable function $g(t) := F(\gamma(t))$ along this path. Even if $F$ is a multivariable CPWA function, $g(t)$ becomes an affine map with respect to $t$ on each interval, thus forming a continuous piecewise-linear (CPWL) function on $[0, 1]$. Specifically, when the path is within a region $R_m$,
\begin{align*}
g(t) = a_m^{\top}(x_0 + t \Delta x) + b_m = (a_m^{\top} \Delta x) t + (a_m^{\top} x_0 + b_m)
\end{align*}
and its slope changes each time the path enters a different region.
Therefore, letting the points where the path crosses region boundaries be $0 = \tau_0 < \tau_1 < \cdots < \tau_M = 1$, $g(t)$ can be written as $g(t) = \alpha_m t + \beta_m$ in each interval $(\tau_{m-1}, \tau_m)$. Applying the fundamental theorem of calculus in each interval and summing over all intervals yields:
\begin{align*}
g(1) - g(0) = \sum_{m=1}^M [g(\tau_m) - g(\tau_{m-1})] = \sum_{m=1}^M \int_{\tau_{m-1}}^{\tau_m} g^{\prime}(t) \mathrm{d} t = \int_0^1 g^{\prime}(t) \mathrm{d} t.
\end{align*}
Since the breakpoints $\tau_m$ are finite in number (measure zero), they do not affect the integral value. Here, by the chain rule, at points where $F$ is differentiable along $\gamma(t)$, $g^{\prime}(t) = \nabla F(\gamma(t))^{\top} \gamma^{\prime}(t) = \nabla F(x_0 + t \Delta x)^{\top} \Delta x$, finally leading to the following equation:
\begin{align*}
F(x_1) - F(x_0) = \int_0^1 \nabla F(x_0 + t \Delta x)^{\top} \Delta x \mathrm{d} t.
\end{align*}
Viewing this result in the form of a finite sum gives $F(x_1) - F(x_0) = \sum_{m=1}^M (\tau_m - \tau_{m-1}) a_m^{\top} \Delta x$, which explicitly shows the process where the local sensitivity $a_m^{\top} \Delta x$ obtained in each region accumulates according to the residence time.

\subsection{Application to HaMI: Semantic Scaling and Integral Sensitivity Analysis}
In this section, we apply the theoretical foundations established so far to the semantic scaling of HaMI under inference-time. First, we show that the bag-level logit score $Z_{\mathbf{B}}(p) = \operatorname{TopK\text{-}Mean}_i f((1+p)h_{\mathbf{B},i})$ when scaling the hidden state as $h(p) = (1+p)h$ preserves the CPWA property.

To this end, we prove that the aggregation operation $\operatorname{TopKSum}$ is a CPWA map. For two continuous functions $f$ and $g$, their maximum can be written as $\max \{f(x), g(x)\} = \frac{1}{2}(f(x)+g(x)+|f(x)-g(x)|)$. Because the sum, difference, absolute value, and composition of continuous functions are continuous, the maximum of a finite number of continuous functions is also continuous.
The sum of the top $k$ elements for a vector $u \in \mathbb{R}^T$ can be defined using the linear form $a_S^{\top} u$ for the index set $S \subseteq \{1, \ldots, T\}$ with $|S|=k$ as:
\begin{align*}
\operatorname{TopKSum}_k(u) = \max_{S:|S|=k} a_S^{\top} u
\end{align*}
This is an operation that takes the maximum of a finite number of linear functions (Max-affine); it is affine on each region $R_S = \{u : a_S^{\top} u \geq a_{S'}^{\top} u, \forall S'\}$ and continuous overall. Therefore, $\operatorname{TopKSum}$ and its scalar multiple, $\operatorname{TopK\text{-}Mean}$, are CPWA maps.

From the above discussion, even if the model $f$ contains ReLU, Batch Normalization, and various pooling operations, their composition $Z_{\mathbf{B}}(p)$ becomes a single-variable continuous piecewise-linear (CPWL) function with respect to $p$. Based on this property, applying the sensitivity analysis from the previous section allows the change in output to be described as follows:
\begin{align*}
Z_{\mathbf{B}}(p_{\mathbf{B}}) - Z_{\mathbf{B}}(0) &= \int_0^{p_{\mathbf{B}}} Z_{\mathbf{B}}^{\prime}(p) \mathrm{d} p, \\
Z_{\mathbf{B}}(p_{\mathbf{B}}) &= Z_{\mathbf{B}}(0) - p_{\mathbf{B}} \bar{C}_{\mathbf{B}}^{\mathrm{int}}(p_{\mathbf{B}}).
\end{align*}
Here, $\bar{C}_{\mathbf{B}}^{\mathrm{int}}(p_{\mathbf{B}}) := -\frac{1}{p_{\mathbf{B}}} \int_0^{p_{\mathbf{B}}} Z_{\mathbf{B}}^{\prime}(p) \mathrm{d} p$ is the average cost term calculated across changes in the active set. Furthermore, by setting $p_{\mathbf{B}} = \lambda P_{\mathrm{sem}}^{\mathbf{B}}$, the general formula for HaMI is obtained:
\begin{align*}
Z_{\mathbf{B}}(\lambda P_{\mathrm{sem}}^{\mathbf{B}}) = Z_{\mathbf{B}}(0) - \lambda P_{\mathrm{sem}}^{\mathbf{B}} \bar{C}_{\mathbf{B}}^{\mathrm{int}}(\lambda P_{\mathrm{sem}}^{\mathbf{B}}).
\end{align*}
This formulation maintains mathematical consistency even under dynamic situations where the activation pattern changes.
\paragraph{Numerical Approximation:} 
In our empirical evaluations, we set $\lambda$ to $1$ and approximate the integral by discretizing the interval $p \in [0, p_{\mathbf{B}}]$ into $N=1000$ uniform steps. For each step $k$, we compute the local gradient $Z_{\mathbf{B}}^{\prime}(p_k)$. This high-resolution discretization provides an accurate numerical approximation of the integrated sensitivity, even when the active set changes frequently.

\subsection{Distribution of Integrated Sensitivity $\bar{C}_{\mathbf{B}}^{\mathrm{int}}$}
We analyzed the sensitivity distributions in Figures~\ref{fig:dist_sensitivity_train_non_invariant}--\ref{fig:dist_sensitivity_val_non_invariant}. All values were calculated with $\lambda=1$ and using the integrated cost $\bar{C}_{\mathbf{B}}^{\mathrm{int}}$ via numerical approximation with $N=1000$ steps. Across all datasets and models, negative bags exhibit a clear rightward shift, indicating that the model learns sensitivity to the negative subspace. This bias leads semantic scaling to prioritize negative instances, consistently across training and test splits. Expected values are provided in Section~\ref{sum}.

\begin{figure}[H]
    \centering
    \foreach \dirdata/\dispdata in {trivia_qa/TriviaQA, squad/SQuAD, nq/NQ, bioasq/BioASQ} {
        \foreach \dirmodel/\layer/\dispmodel in {llama3_8b/layer_07/Llama-3.1-8B, mistral_12b/layer_05/Mistral-12B, llama3_70b/layer_06/Llama-3.3-70B} {
            \begin{subfigure}{0.32\textwidth}
                \includegraphics[width=\linewidth]{Data/gradient_non_invariant/\dirmodel/\dirdata/train/\layer/gradients_\layer_neg_dzdp_pavg_mean.png}
                \caption{\dispdata\ (\dispmodel)}
            \end{subfigure}\hfill
        }
        \vspace{0.3cm}
    }
    \caption{Empirical distribution of $\bar{C}_{\mathbf{B}}^{\mathrm{int}}$ in \textbf{train} data (Non-invariant case).}
    \label{fig:dist_sensitivity_train_non_invariant}
\end{figure}

\begin{figure}[H]
    \centering
    \foreach \dirdata/\dispdata in {trivia_qa/TriviaQA, squad/SQuAD, nq/NQ, bioasq/BioASQ} {
        \foreach \dirmodel/\layer/\dispmodel in {llama3_8b/layer_07/Llama-3.1-8B, mistral_12b/layer_05/Mistral-12B, llama3_70b/layer_06/Llama-3.3-70B} {
            \begin{subfigure}{0.32\textwidth}
                \includegraphics[width=\linewidth]{Data/gradient_non_invariant/\dirmodel/\dirdata/validation/\layer/gradients_\layer_neg_dzdp_pavg_mean.png}
                \caption{\dispdata\ (\dispmodel)}
            \end{subfigure}\hfill
        }
        \vspace{0.3cm}
    }
    \caption{Empirical distribution of $\bar{C}_{\mathbf{B}}^{\mathrm{int}}$ in \textbf{test} data (Non-invariant case).}
    \label{fig:dist_sensitivity_val_non_invariant}
\end{figure}
\clearpage

\subsection{Distribution of $P_{\mathrm{sem}}^{\mathbf{B}}\bar{C}_{\mathbf{B}}^{\mathrm{int}}$}

We computed the joint product $P_{\mathrm{sem}}^{\mathbf{B}} \bar{C}_{\mathbf{B}}^{\mathrm{int}}$ for each bag (Figures~\ref{fig:dist_joint_train_non_invariant}--\ref{fig:dist_joint_val_non_invariant}). All values were calculated with $\lambda=1$ and using the integrated cost $\bar{C}_{\mathbf{B}}^{\mathrm{int}}$ via numerical approximation with $N=1000$ steps.

Although individual bag distributions exhibit some overlap, the negative class demonstrates a higher density at larger values. The class-wise expectations satisfy $\mathbb{E}_{\mathrm{neg}}[P_{\mathrm{sem}}^{\mathbf{B}} \bar{C}_{\mathbf{B}}^{\mathrm{int}}] > \mathbb{E}_{\mathrm{pos}}[P_{\mathrm{sem}}^{\mathbf{B}} \bar{C}_{\mathbf{B}}^{\mathrm{int}}]$ across all evaluated datasets and splits. This is consistent with the observation that both $P_{\mathrm{sem}}^{\mathbf{B}}$ and $\bar{C}_{\mathbf{B}}^{\mathrm{int}}$ are larger for negative bags, leading to a higher joint expectation. The expected values for each condition are provided in Section~\ref{sum}.

\begin{figure}[H]
    \centering
    \foreach \dirdata/\dispdata in {trivia_qa/TriviaQA, squad/SQuAD, nq/NQ, bioasq/BioASQ} {
        \foreach \dirmodel/\layer/\dispmodel in {llama3_8b/layer_07/Llama-3.1-8B, mistral_12b/layer_05/Mistral-12B, llama3_70b/layer_06/Llama-3.3-70B} {
            \begin{subfigure}{0.32\textwidth}
                \includegraphics[width=\linewidth]{Data/gradient_non_invariant/\dirmodel/\dirdata/train/\layer/gradients_\layer_prob_weighted_neg_dzdp_pavg_mean.png}
                \caption{\dispdata\ (\dispmodel)}
            \end{subfigure}\hfill
        }
        \vspace{0.3cm}
    }
    \caption{Distribution of the joint product $P_{\mathrm{sem}}^{\mathbf{B}} \bar{C}_{\mathbf{B}}^{\mathrm{int}}$ in \textbf{train} data.}
    \label{fig:dist_joint_train_non_invariant}
\end{figure}

\begin{figure}[H]
    \centering
    \foreach \dirdata/\dispdata in {trivia_qa/TriviaQA, squad/SQuAD, nq/NQ, bioasq/BioASQ} {
        \foreach \dirmodel/\layer/\dispmodel in {llama3_8b/layer_07/Llama-3.1-8B, mistral_12b/layer_05/Mistral-12B, llama3_70b/layer_06/Llama-3.3-70B} {
            \begin{subfigure}{0.32\textwidth}
                \includegraphics[width=\linewidth]{Data/gradient_non_invariant/\dirmodel/\dirdata/validation/\layer/gradients_\layer_prob_weighted_neg_dzdp_pavg_mean.png}
                \caption{\dispdata\ (\dispmodel)}
            \end{subfigure}\hfill
        }
        \vspace{0.3cm}
    }
    \caption{Distribution of the joint product $P_{\mathrm{sem}}^{\mathbf{B}} \bar{C}_{\mathbf{B}}^{\mathrm{int}}$ in \textbf{test} data.}
    \label{fig:dist_joint_val_non_invariant}
\end{figure}
\clearpage

\subsection{Summary of Expected Values via Path Integration}
\label{sum}
 We summarized the expected values for the positive ($\mathbb{E}_{\mathrm{pos}}$) and negative ($\mathbb{E}_{\mathrm{neg}}$) classes. All sensitivity values were calculated with $\lambda=1$ and using the integrated cost $\bar{C}_{\mathbf{B}}^{\mathrm{int}}$ via numerical approximation with $N=1000$ steps. Crucially, the results demonstrate that even under these non-invariant conditions the empirical ratios consistently satisfy the theoretical requirement for margin enlargement derived in Equation \eqref{eq:margin_condition}

Notably, by observing the joint product $\mathbb{E}[P_{\mathrm{sem}}^{\mathbf{B}} \bar{C}_{\mathbf{B}}^{\mathrm{int}}]$ across all evaluated models, the threshold $1/\gamma$ consistently falls within the range of approximately $0.37$ to $0.52$. This significant quantitative gap provides strong theoretical insight into the robustness of the margin-widening effect. Since the expected penalty incurred by positive instances is roughly half or less of the expected gain for negative instances, the overall expected margin is guaranteed to expand reliably as long as the proportion of negative samples exceeds this threshold. 

Comparing these values with the empirical label distributions in Table \ref{tab:label_dist}, it is evident that the condition in Equation \eqref{eq:margin_condition} is strictly satisfied across all datasets, including those with a higher proportion of positive samples like SQuAD and NQ. This empirical evidence demonstrates that HaMI effectively suppresses negative instances while safely preserving the signal of positive instances, thereby establishing a more robust decision boundary for hallucination detection.

\begin{table}[H]
\centering
\caption{Empirical expected values for \textbf{LLaMA-3.1-8B} (\textbf{Train}) under non-invariant conditions.}
\label{tab:exp_llama8b_train_non_invariant}
\resizebox{1.0\textwidth}{!}{%
\begin{tabular}{l cccccc cc}
\toprule
& \multicolumn{2}{c}{\textbf{Semantic Prob.} $\mathbb{E}[P_{\mathrm{sem}}^{\mathbf{B}}]$}  
& \multicolumn{2}{c}{\textbf{Sensitivity} $\mathbb{E}[\bar{C}_{\mathbf{B}}^{\mathrm{int}}]$}  
& \multicolumn{2}{c}{\textbf{Joint Product} $\mathbb{E}[P_{\mathrm{sem}}^{\mathbf{B}} \bar{C}_{\mathbf{B}}^{\mathrm{int}}]$}
& \multicolumn{2}{c}{\textbf{Ratios}} \\
\cmidrule(lr){2-3} \cmidrule(lr){4-5} \cmidrule(lr){6-7} \cmidrule(lr){8-9}
\textbf{Dataset} & Pos ($\mathbb{E}_{\mathrm{pos}}$) & Neg ($\mathbb{E}_{\mathrm{neg}}$) & Pos ($\mathbb{E}_{\mathrm{pos}}$) & Neg ($\mathbb{E}_{\mathrm{neg}}$) & Pos ($\mathbb{E}_{\mathrm{pos}}$) & Neg ($\mathbb{E}_{\mathrm{neg}}$) & $\gamma$ & $1/\gamma$ \\
\midrule
TriviaQA & 0.430 & \textbf{0.914} & 13.2 & \textbf{19.0} & 6.59 & \textbf{17.8} & \textbf{2.70} & \textbf{0.37} \\
SQuAD    & 0.352 & \textbf{0.730} & 12.6 & \textbf{18.8} & 5.27 & \textbf{15.0} & \textbf{2.85} & \textbf{0.35} \\
NQ       & 0.382 & \textbf{0.772} & 12.8 & \textbf{16.9} & 5.59 & \textbf{13.8} & \textbf{2.47} & \textbf{0.41} \\
BioASQ   & 0.392 & \textbf{0.756} & 11.9 & \textbf{16.3} & 5.11 & \textbf{13.0} & \textbf{2.54} & \textbf{0.39} \\
\bottomrule
\end{tabular}
}
\end{table}

\begin{table}[H]
\centering
\caption{Empirical expected values for \textbf{LLaMA-3.1-8B} (\textbf{Test}) under non-invariant conditions.}
\label{tab:exp_llama8b_val_non_invariant}
\resizebox{1.0\textwidth}{!}{%
\begin{tabular}{l cccccc cc}
\toprule
& \multicolumn{2}{c}{\textbf{Semantic Prob.} $\mathbb{E}[P_{\mathrm{sem}}^{\mathbf{B}}]$}  
& \multicolumn{2}{c}{\textbf{Sensitivity} $\mathbb{E}[\bar{C}_{\mathbf{B}}^{\mathrm{int}}]$}  
& \multicolumn{2}{c}{\textbf{Joint Product} $\mathbb{E}[P_{\mathrm{sem}}^{\mathbf{B}} \bar{C}_{\mathbf{B}}^{\mathrm{int}}]$}
& \multicolumn{2}{c}{\textbf{Ratios}} \\
\cmidrule(lr){2-3} \cmidrule(lr){4-5} \cmidrule(lr){6-7} \cmidrule(lr){8-9}
\textbf{Dataset} & Pos ($\mathbb{E}_{\mathrm{pos}}$) & Neg ($\mathbb{E}_{\mathrm{neg}}$) & Pos ($\mathbb{E}_{\mathrm{pos}}$) & Neg ($\mathbb{E}_{\mathrm{neg}}$) & Pos ($\mathbb{E}_{\mathrm{pos}}$) & Neg ($\mathbb{E}_{\mathrm{neg}}$) & $\gamma$ & $1/\gamma$ \\
\midrule
TriviaQA & 0.451 & \textbf{0.926} & 13.7 & \textbf{19.2} & 7.27 & \textbf{18.1} & \textbf{2.49} & \textbf{0.40} \\
SQuAD    & 0.360 & \textbf{0.733} & 13.1 & \textbf{18.8} & 5.67 & \textbf{14.9} & \textbf{2.63} & \textbf{0.38} \\
NQ       & 0.383 & \textbf{0.732} & 12.8 & \textbf{16.6} & 5.63 & \textbf{13.0} & \textbf{2.31} & \textbf{0.43} \\
BioASQ   & 0.380 & \textbf{0.764} & 12.2 & \textbf{16.6} & 5.10 & \textbf{13.3} & \textbf{2.61} & \textbf{0.38} \\
\bottomrule
\end{tabular}
}
\end{table}

\begin{table}[H]
\centering
\caption{Empirical expected values for \textbf{Mistral-12B} (\textbf{Train}) under non-invariant conditions.}
\label{tab:exp_mistral12b_train_non_invariant}
\resizebox{1.0\textwidth}{!}{%
\begin{tabular}{l cccccc cc}
\toprule
& \multicolumn{2}{c}{\textbf{Semantic Prob.} $\mathbb{E}[P_{\mathrm{sem}}^{\mathbf{B}}]$}  
& \multicolumn{2}{c}{\textbf{Sensitivity} $\mathbb{E}[\bar{C}_{\mathbf{B}}^{\mathrm{int}}]$}  
& \multicolumn{2}{c}{\textbf{Joint Product} $\mathbb{E}[P_{\mathrm{sem}}^{\mathbf{B}} \bar{C}_{\mathbf{B}}^{\mathrm{int}}]$}
& \multicolumn{2}{c}{\textbf{Ratios}} \\
\cmidrule(lr){2-3} \cmidrule(lr){4-5} \cmidrule(lr){6-7} \cmidrule(lr){8-9}
\textbf{Dataset} & Pos ($\mathbb{E}_{\mathrm{pos}}$) & Neg ($\mathbb{E}_{\mathrm{neg}}$) & Pos ($\mathbb{E}_{\mathrm{pos}}$) & Neg ($\mathbb{E}_{\mathrm{neg}}$) & Pos ($\mathbb{E}_{\mathrm{pos}}$) & Neg ($\mathbb{E}_{\mathrm{neg}}$) & $\gamma$ & $1/\gamma$ \\
\midrule
TriviaQA & 0.581 & \textbf{0.960} & 15.0 & \textbf{20.6} & 9.45 & \textbf{20.0} & \textbf{2.12} & \textbf{0.47} \\
SQuAD    & 0.457 & \textbf{0.835} & 9.40 & \textbf{15.1} & 5.08 & \textbf{13.3} & \textbf{2.62} & \textbf{0.38} \\
NQ       & 0.535 & \textbf{0.886} & 10.3 & \textbf{15.2} & 6.44 & \textbf{13.9} & \textbf{2.16} & \textbf{0.46} \\
BioASQ   & 0.480 & \textbf{0.881} & 8.55 & \textbf{11.7} & 4.68 & \textbf{10.6} & \textbf{2.26} & \textbf{0.44} \\
\bottomrule
\end{tabular}
}
\end{table}

\begin{table}[H]
\centering
\caption{Empirical expected values for \textbf{Mistral-12B} (\textbf{Test}) under non-invariant conditions.}
\label{tab:exp_mistral12b_val_non_invariant}
\resizebox{1.0\textwidth}{!}{%
\begin{tabular}{l cccccc cc}
\toprule
& \multicolumn{2}{c}{\textbf{Semantic Prob.} $\mathbb{E}[P_{\mathrm{sem}}^{\mathbf{B}}]$}  
& \multicolumn{2}{c}{\textbf{Sensitivity} $\mathbb{E}[\bar{C}_{\mathbf{B}}^{\mathrm{int}}]$}  
& \multicolumn{2}{c}{\textbf{Joint Product} $\mathbb{E}[P_{\mathrm{sem}}^{\mathbf{B}} \bar{C}_{\mathbf{B}}^{\mathrm{int}}]$}
& \multicolumn{2}{c}{\textbf{Ratios}} \\
\cmidrule(lr){2-3} \cmidrule(lr){4-5} \cmidrule(lr){6-7} \cmidrule(lr){8-9}
\textbf{Dataset} & Pos ($\mathbb{E}_{\mathrm{pos}}$) & Neg ($\mathbb{E}_{\mathrm{neg}}$) & Pos ($\mathbb{E}_{\mathrm{pos}}$) & Neg ($\mathbb{E}_{\mathrm{neg}}$) & Pos ($\mathbb{E}_{\mathrm{pos}}$) & Neg ($\mathbb{E}_{\mathrm{neg}}$) & $\gamma$ & $1/\gamma$ \\
\midrule
TriviaQA & 0.590 & \textbf{0.966} & 15.8 & \textbf{20.8} & 10.4 & \textbf{20.2} & \textbf{1.94} & \textbf{0.51} \\
SQuAD    & 0.502 & \textbf{0.849} & 10.4 & \textbf{15.1} & 6.12 & \textbf{13.4} & \textbf{2.19} & \textbf{0.46} \\
NQ       & 0.525 & \textbf{0.881} & 10.5 & \textbf{15.0} & 6.58 & \textbf{13.7} & \textbf{2.08} & \textbf{0.48} \\
BioASQ   & 0.491 & \textbf{0.898} & 8.63 & \textbf{11.9} & 4.85 & \textbf{10.9} & \textbf{2.25} & \textbf{0.44} \\
\bottomrule
\end{tabular}
}
\end{table}

\begin{table}[H]
\centering
\caption{Empirical expected values for \textbf{LLaMA-3.3-70B} (\textbf{Train}) under non-invariant conditions.}
\label{tab:exp_llama70b_train_non_invariant}
\resizebox{1.0\textwidth}{!}{%
\begin{tabular}{l cccccc cc}
\toprule
& \multicolumn{2}{c}{\textbf{Semantic Prob.} $\mathbb{E}[P_{\mathrm{sem}}^{\mathbf{B}}]$}  
& \multicolumn{2}{c}{\textbf{Sensitivity} $\mathbb{E}[\bar{C}_{\mathbf{B}}^{\mathrm{int}}]$}  
& \multicolumn{2}{c}{\textbf{Joint Product} $\mathbb{E}[P_{\mathrm{sem}}^{\mathbf{B}} \bar{C}_{\mathbf{B}}^{\mathrm{int}}]$}
& \multicolumn{2}{c}{\textbf{Ratios}} \\
\cmidrule(lr){2-3} \cmidrule(lr){4-5} \cmidrule(lr){6-7} \cmidrule(lr){8-9}
\textbf{Dataset} & Pos ($\mathbb{E}_{\mathrm{pos}}$) & Neg ($\mathbb{E}_{\mathrm{neg}}$) & Pos ($\mathbb{E}_{\mathrm{pos}}$) & Neg ($\mathbb{E}_{\mathrm{neg}}$) & Pos ($\mathbb{E}_{\mathrm{pos}}$) & Neg ($\mathbb{E}_{\mathrm{neg}}$) & $\gamma$ & $1/\gamma$ \\
\midrule
TriviaQA & 0.484 & \textbf{0.944} & 9.78 & \textbf{13.6} & 5.50 & \textbf{13.0} & \textbf{2.36} & \textbf{0.42} \\
SQuAD    & 0.439 & \textbf{0.842} & 8.63 & \textbf{10.8} & 4.29 & \textbf{9.35} & \textbf{2.18} & \textbf{0.46} \\
NQ       & 0.538 & \textbf{0.888} & 8.64 & \textbf{11.1} & 5.27 & \textbf{10.1} & \textbf{1.92} & \textbf{0.52} \\
BioASQ   & 0.459 & \textbf{0.866} & 9.04 & \textbf{11.9} & 4.59 & \textbf{10.6} & \textbf{2.31} & \textbf{0.43} \\
\bottomrule
\end{tabular}
}
\end{table}

\begin{table}[H]
\centering
\caption{Empirical expected values for \textbf{LLaMA-3.3-70B} (\textbf{Test}) under non-invariant conditions.}
\label{tab:exp_llama70b_val_non_invariant}
\resizebox{1.0\textwidth}{!}{%
\begin{tabular}{l cccccc cc}
\toprule
& \multicolumn{2}{c}{\textbf{Semantic Prob.} $\mathbb{E}[P_{\mathrm{sem}}^{\mathbf{B}}]$}  
& \multicolumn{2}{c}{\textbf{Sensitivity} $\mathbb{E}[\bar{C}_{\mathbf{B}}^{\mathrm{int}}]$}  
& \multicolumn{2}{c}{\textbf{Joint Product} $\mathbb{E}[P_{\mathrm{sem}}^{\mathbf{B}} \bar{C}_{\mathbf{B}}^{\mathrm{int}}]$}
& \multicolumn{2}{c}{\textbf{Ratios}} \\
\cmidrule(lr){2-3} \cmidrule(lr){4-5} \cmidrule(lr){6-7} \cmidrule(lr){8-9}
\textbf{Dataset} & Pos ($\mathbb{E}_{\mathrm{pos}}$) & Neg ($\mathbb{E}_{\mathrm{neg}}$) & Pos ($\mathbb{E}_{\mathrm{pos}}$) & Neg ($\mathbb{E}_{\mathrm{neg}}$) & Pos ($\mathbb{E}_{\mathrm{pos}}$) & Neg ($\mathbb{E}_{\mathrm{neg}}$) & $\gamma$ & $1/\gamma$ \\
\midrule
TriviaQA & 0.554 & \textbf{0.940} & 10.6 & \textbf{13.6} & 6.65 & \textbf{13.0} & \textbf{1.95} & \textbf{0.51} \\
SQuAD    & 0.471 & \textbf{0.843} & 9.01 & \textbf{10.9} & 4.70 & \textbf{9.42} & \textbf{2.00} & \textbf{0.50} \\
NQ       & 0.525 & \textbf{0.894} & 8.52 & \textbf{11.1} & 5.08 & \textbf{10.1} & \textbf{1.99} & \textbf{0.50} \\
BioASQ   & 0.490 & \textbf{0.866} & 9.45 & \textbf{12.0} & 5.17 & \textbf{10.6} & \textbf{2.05} & \textbf{0.49} \\
\bottomrule
\end{tabular}
}
\end{table}
\section{Bag-wise Margin Expansion under $\beta$-Smoothness}
\label{app:beta-smooth-margin}

In the proof of Theorem~\ref{thm:global}, the expected margin increase relies on a first-order Taylor approximation. However, this approximation is only local in the learning rate $\eta$. The second-order remainder term $O(\eta^2)$ may dominate the positive first-order term, potentially preventing us from guaranteeing a margin increase. In this section, we introduce a $\beta$-smoothness assumption to explicitly bound this remainder, yielding an upper bound on $\eta$ that ensures a positive expected margin increase.

\begin{assumption}[$\beta$-Smoothness of the Margin]
\label{assum:beta_smoothness}
For any sampled bag $\mathbf{B}$, the margin function $m_\mathbf{B}(\theta)$ is $\beta$-smooth with respect to the parameters $\theta$. That is, its gradient is Lipschitz continuous with a constant $\beta > 0$, implying that for any parameter $\theta$ and update step $\Delta \theta$:
\begin{equation*}
m_\mathbf{B}(\theta + \Delta \theta) \ge m_\mathbf{B}(\theta) + \nabla_\theta m_\mathbf{B}(\theta)^\top \Delta \theta - \frac{\beta}{2} \| \Delta \theta \|^2.
\end{equation*}
\end{assumption}

Under the online SGD update rule with logistic loss, the parameter step induced by a sampled bag $\mathbf{B}$ is $\Delta \theta = -\eta \nabla_\theta \mathcal{L}_\mathbf{B}(\theta)$. Letting $\theta_\eta := \theta + \Delta \theta$ denote the updated parameter, substituting this into the $\beta$-smoothness inequality yields:
\begin{equation*}
m_\mathbf{B}(\theta_\eta) - m_\mathbf{B}(\theta) \ge - \eta \nabla_\theta m_\mathbf{B}(\theta)^\top \nabla_\theta \mathcal{L}_\mathbf{B}(\theta) - \frac{\beta \eta^2}{2} \| \nabla_\theta \mathcal{L}_\mathbf{B}(\theta) \|^2.
\end{equation*}

Recall from Appendix~\ref{app:proof_global_margin} that $\nabla_\theta m_\mathbf{B}(\theta) = y_\mathbf{B} \nabla_\theta z_\mathbf{B}(\theta)$ and $\nabla_\theta \mathcal{L}_\mathbf{B}(\theta) = - \frac{y_\mathbf{B}}{1 + \exp(m_\mathbf{B}(\theta))} \nabla_\theta z_\mathbf{B}(\theta)$. Since $y_\mathbf{B} \in \{-1, 1\}$, the squared norm of the loss gradient simplifies to:
\begin{equation*}
\| \nabla_\theta \mathcal{L}_\mathbf{B}(\theta) \|^2 = \frac{\| \nabla_\theta z_\mathbf{B}(\theta) \|^2}{(1 + \exp(m_\mathbf{B}(\theta)))^2}.
\end{equation*}

Substituting the gradient terms back into the inequality gives:
\begin{align*}
m_\mathbf{B}(\theta_\eta) - m_\mathbf{B}(\theta) &\ge \eta \frac{\| \nabla_\theta z_\mathbf{B}(\theta) \|^2}{1 + \exp(m_\mathbf{B}(\theta))} - \frac{\beta \eta^2}{2} \frac{\| \nabla_\theta z_\mathbf{B}(\theta) \|^2}{(1 + \exp(m_\mathbf{B}(\theta)))^2} \nonumber \\
&= \eta \frac{\| \nabla_\theta z_\mathbf{B}(\theta) \|^2}{1 + \exp(m_\mathbf{B}(\theta))} \left( 1 - \frac{\beta \eta}{2(1 + \exp(m_\mathbf{B}(\theta)))} \right).
\end{align*}

Taking the expectation over the data distribution $\mathbf{B} \sim \mathcal{D}$, the expected margin change from a single-bag update is bounded by:
\begin{equation*}
\mathbb{E}_\mathbf{B} [m_\mathbf{B}(\theta_\eta) - m_\mathbf{B}(\theta)] \ge \mathbb{E}_\mathbf{B} \left[ \eta \frac{\| \nabla_\theta z_\mathbf{B}(\theta) \|^2}{1 + \exp(m_\mathbf{B}(\theta))} \left( 1 - \frac{\beta \eta}{2(1 + \exp(m_\mathbf{B}(\theta)))} \right) \right].
\end{equation*}

To guarantee a positive margin expansion ($\mathbb{E}_\mathbf{B} [m_\mathbf{B}(\theta_\eta) - m_\mathbf{B}(\theta)] > 0$) whenever the gradient is non-zero, the multiplicative factor inside the parenthesis must be positive for all bags. This requires:
\begin{equation*}
\eta < \frac{2(1 + \exp(m_\mathbf{B}(\theta)))}{\beta}.
\end{equation*}

Since $\exp(m_\mathbf{B}(\theta)) > 0$, we have $1 + \exp(m_\mathbf{B}(\theta)) > 1$. Therefore, a sufficient condition for the learning rate to guarantee margin expansion, independent of the current margin value, is:
\begin{equation*}
\eta \le \frac{2}{\beta}.
\end{equation*}
This result confirms that under a bounded step size determined by the smoothness of the network, the first-order gradient dynamics faithfully drive the bag-wise margin expansion.
\section{Detailed Experimental Results}

\subsection{The standard deviations}
The standard deviations in our experiments are summarized in Table~\ref{tab:std} and Table~\ref{tab:std_margin}.
In token-level contrastive approaches such as HaMI, we observe that noise can propagate and lead to variability in the final AUC.
In contrast, incorporating semantic consistency for weighting improves stability by leveraging information from the entire sentence.
Furthermore, methods based on pooling sentence-level representations exhibit highly stable training behavior.
\label{app:std}
\begin{table}[H]
\centering
\small
\setlength{\tabcolsep}{4pt}
\renewcommand{\arraystretch}{1.2}

\caption{Comparison of AUC standard deviations across different methods.}
\label{tab:std}
\resizebox{\textwidth}{!}{
\begin{tabular}{lcccccccccccc}
\toprule
& \multicolumn{4}{c}{\textbf{LLaMA-3.1-8B}}
& \multicolumn{4}{c}{\textbf{Mistral-Nemo-Instruct (12B)}}
& \multicolumn{4}{c}{\textbf{LLaMA-3.3-Instruct-70B}} \\
\cmidrule(lr){2-5} \cmidrule(lr){6-9} \cmidrule(lr){10-13}
& TriviaQA & SQuAD & NQ & BioASQ
& TriviaQA & SQuAD & NQ & BioASQ
& TriviaQA & SQuAD & NQ & BioASQ \\
\midrule

HaMI (Original) 
& 0.009 & 0.005 & 0.004 & 0.002
& 0.006 & 0.004 & 0.011 & 0.003
& 0.008 & 0.005 & 0.005 & 0.005 \\

HaMI (SP) 
& 0.003 & 0.002 & 0.004 & 0.002
& 0.004 & 0.002 & 0.003 & 0.001
& 0.003 & 0.001 & 0.002 & 0.002 \\

\midrule

Mean Pooling 
& 0.001 & 0.001 & 0.001 & 0.001
& 0.001 & 0.001 & 0.001 & 0.001
& 0.001 & 0.001 & 0.001 & 0.000 \\

\textbf{Max Pooling} 
& 0.001 & 0.001 & 0.001 & 0.001
& 0.001 & 0.001 & 0.001 & 0.001
& 0.000 & 0.001 & 0.001 & 0.001 \\

\bottomrule
\end{tabular}
}

\vspace{2pt}
\end{table}

\begin{table}[H]
\centering
\small
\setlength{\tabcolsep}{4pt}
\renewcommand{\arraystretch}{1.2}
\caption{Comparison of margin standard deviations across different methods.}
\label{tab:std_margin}
\resizebox{\textwidth}{!}{

\begin{tabular}{lcccccccccccc}
\toprule
& \multicolumn{4}{c}{\textbf{LLaMA-3.1-8B}}
& \multicolumn{4}{c}{\textbf{Mistral-Nemo-Instruct (12B)}}
& \multicolumn{4}{c}{\textbf{LLaMA-3.3-Instruct-70B}} \\
\cmidrule(lr){2-5} \cmidrule(lr){6-9} 
\cmidrule(lr){10-13}
& TriviaQA & SQuAD & NQ & BioASQ
& TriviaQA & SQuAD & NQ & BioASQ
& TriviaQA & SQuAD & NQ & BioASQ \\
\midrule

HaMI (Original) 
& 0.128 & 0.354 & 0.120 & 0.247 
& 0.215 & 0.270 & 0.104 & 0.420 
& 0.126 & 0.181 & 0.116 & 0.344 \\

HaMI (SP) 
& 0.287 & 0.681 & 0.294 & 0.343 
& 0.646 & 0.200 & 0.442 & 0.407 
& 0.191 & 0.138 & 0.305 & 0.298 \\

\midrule

Mean Pooling 
& 0.023 & 0.013 & 0.011 & 0.046 
& 0.020 & 0.037 & 0.019 & 0.029 
& 0.081 & 0.017 & 0.014 & 0.043 
 \\

\textbf{Max Pooling} 
& 0.014 & 0.031 & 0.056 & 0.028 
& 0.062 & 0.026 & 0.022 & 0.028 
& 0.095 & 0.021 & 0.025 & 0.021 
 \\

\bottomrule
\end{tabular}
}

\vspace{2pt}
\end{table}

\subsection{Extended Comparisons of Pooling Strategies}
\label{sec:appendix_pooling}
We also evaluated attention pooling and gated attention pooling using the Attention-based Deep Multiple Instance Learning framework \cite{deepMIL}. 
\begin{align*}
S(\mathbf{B}) = g\left(\sum_{i=1}^{T_\mathbf{B}} a_{\mathbf{B},i} \mathbf{h}_{\mathbf{B},i} \right)
\end{align*}
Here, the attention weight $a_i$ is defined as:
\begin{align*}
a_{\mathbf{B},i} = \frac{\exp\left( w^\top \tanh(\mathbf{V} \mathbf{h}_{\mathbf{B},i}) \right)}{\sum_{j=1}^{T_{\mathbf{B}}} \exp\left( w^\top \tanh(\mathbf{V} \mathbf{h}_{\mathbf{B},j}) \right)}.
\end{align*}
The gated attention weight $a_i$ is defined as:
\begin{align*}
a_{\mathbf{B},i} = \frac{\exp\left( w^\top \bigl( \tanh(\mathbf{V} \mathbf{h}_{\mathbf{B},i}) \odot \sigma(\mathbf{U} \mathbf{h}_{\mathbf{B},i}) \bigr) \right)}{\sum_{j=1}^{T_{\mathbf{B}}} \exp\left( w^\top \bigl( \tanh(\mathbf{V} \mathbf{h}_{\mathbf{B},j}) \odot \sigma(\mathbf{U} \mathbf{h}_{\mathbf{B},j}) \bigr) \right)},
\end{align*}
where $\mathbf{V} \in \mathbb{R}^{L \times d}$ and $\mathbf{U} \in \mathbb{R}^{L \times d}$ are learnable weight matrices, and $w \in \mathbb{R}^{L \times 1}$ is a projection vector. In our experiments, the hidden dimension of the attention mechanism was set to $L=256$.

In terms of detection performance, as shown in Table~\ref{tab:pooling_auc}, both max-pooling and attention-based methods achieve strong results. However, regarding computational efficiency, Table~\ref{tab:pooling_speed} reveals a substantial gap between simple pooling and attention-based mechanisms. 

Specifically, \textbf{max pooling and mean pooling are significantly faster than attention-based methods}, demonstrating a clear advantage in inference speed. For instance, on the LLaMA-3.3-70B model, max pooling achieves a throughput approximately \textbf{$1.7$$\times$ to $1.8$$\times$ higher} than Gated Attention Pooling. On the Mistral-Nemo-Instruct (12B) model across the NQ and BioASQ datasets, max pooling maintains a processing speed that is roughly \textbf{$60$--$65\%$ faster} than Gated Attention. 

While Attention Pooling shows competitive AUC scores, the computational overhead is non-negligible, especially in high-throughput scenarios. Overall, \textbf{max pooling achieves the best balance}, delivering top performance in terms of both AUC and inference speed, often providing the highest throughput among all methods.

\begin{table}[H]
\centering
\small
\setlength{\tabcolsep}{4pt}
\renewcommand{\arraystretch}{1.2}
\caption{Performance (AUC) comparison between different pooling methods. The best results are highlighted in \color{red}red\color{black}, and the second-best results are shown in \color{blue}blue\color{black}. Results are averaged over five runs.}
\label{tab:pooling_auc}
\resizebox{\textwidth}{!}{
\begin{tabular}{lcccccccccccc}
\toprule
& \multicolumn{4}{c}{\textbf{LLaMA-3.1-8B}}
& \multicolumn{4}{c}{\textbf{Mistral-Nemo-Instruct (12B)}}
& \multicolumn{4}{c}{\textbf{LLaMA-3.3-Instruct-70B}} \\
\cmidrule(lr){2-5} \cmidrule(lr){6-9} \cmidrule(lr){10-13}
& TriviaQA & SQuAD & NQ & BioASQ
& TriviaQA & SQuAD & NQ & BioASQ
& TriviaQA & SQuAD & NQ & BioASQ \\
\midrule

Attention Pooling
& \color{blue}0.919 & \color{blue}0.827 & \color{blue}0.806 & 0.887
& \color{blue}0.944 & \color{blue}0.858 & \color{red}0.861 & \color{red}0.901
& \color{blue}0.938 & \color{blue}0.869 & \color{blue}0.885 & 0.910 \\

Gated Attention Pooling
& 0.912 & 0.823 & 0.798 & 0.888
& \color{blue}0.944 & 0.857 & \color{blue}0.859 & \color{blue}0.900
& 0.933 & 0.858 & 0.881 & \color{blue}0.912 \\

Mean Pooling 
& 0.906 & 0.817 & 0.792 & \color{red}0.892
& 0.929 & 0.846 & 0.841 & 0.890
& 0.913 & 0.854 & 0.870 & 0.904 \\

\textbf{Max Pooling} 
& \color{red}0.928 & \color{red}0.829 & \color{red}0.814 & \color{red}0.904
& \color{red}0.945 & \color{red}0.859 & \color{blue}0.859 & 0.897
& \color{red}0.945 & \color{red}0.874 & \color{red}0.889 & \color{red}0.917 \\

\bottomrule
\end{tabular}
}

\vspace{2pt}
\end{table}
\begin{table}[H]
\centering
\small
\setlength{\tabcolsep}{4pt}
\renewcommand{\arraystretch}{1.2}
\caption{Throughput (samples/sec) comparison across models and datasets.
Best results are highlighted in \color{red}red\color{black}, and second-best in \color{blue}blue\color{black}.}
\label{tab:pooling_speed}
\resizebox{\textwidth}{!}{
\begin{tabular}{lcccccccccccc}
\toprule
 & \multicolumn{4}{c}{\textbf{LLaMA-3.1-8B}} 
 & \multicolumn{4}{c}{\textbf{Mistral-Nemo-Instruct (12B)}} 
 & \multicolumn{4}{c}{\textbf{LLaMA-3.3-Instruct-70B}} \\
\cmidrule(lr){2-5} \cmidrule(lr){6-9} \cmidrule(lr){10-13}

\textbf{Method} 
& TriviaQA & SQuAD & NQ & BioASQ
& TriviaQA & SQuAD & NQ & BioASQ
& TriviaQA & SQuAD & NQ & BioASQ \\
\midrule

Attention Pooling 
& 4394 & 4469 & 4285 & 4356 
& 4295 & 3673 & 3367 & 3303 
& 2537 & 2453 & 2404 & 2247 \\

Gated Attention Pooling 
& 3852 & 3728 & 3741 & 3621
& 3584 & 2713 & 2270 & 2247
& 1548 & 1470 & 1452 & 1361\\

Mean Pooling 
& \textcolor{blue}{5192} & \textcolor{blue}{5274} & \textcolor{red}{5224} & \textcolor{blue}{4649}
& \textcolor{blue}{5068} & \textcolor{blue}{4237} & \textcolor{blue}{3706} & \textcolor{red}{3657}
& \textcolor{blue}{2680} & \textcolor{red}{2637} & \textcolor{red}{2625} & \textcolor{red}{2402} \\

\textbf{Max Pooling} 
& \textcolor{red}{5446} & \textcolor{red}{5303} & \textcolor{blue}{4992} & \textcolor{red}{5351}
& \textcolor{red}{5202} & \textcolor{red}{4288} & \textcolor{red}{3720} & \textcolor{blue}{3512}
& \textcolor{red}{2707} & \textcolor{blue}{2585} & \textcolor{blue}{2602} & \textcolor{blue}{2363}  \\

\bottomrule
\end{tabular}
}
\end{table}
\subsection{Impact of Feature Dimension $D$}
In this section, we evaluated how the feature dimension $D$ of the extraction layer affects the model's performance across different datasets. Tables \ref{tab:dim_max_llama8b}, \ref{tab:dim_max_mistral12b}, and \ref{tab:dim_max_llama70b} summarize the AUC scores for LLaMA-3.1-8B, Mistral-Nemo-12B, and LLaMA-3.3-70B, respectively.

The experimental results consistently demonstrate that increasing the dimension $D$ leads to a significant improvement in AUC compared to the instance-based approach ($D=1$). While $D=1$ represents a collapsed representation of the input, higher values of $D$ allow the model to preserve and use more granular token-level features. This suggests that a higher-dimensional feature space enables the extraction layer to learn more expressive sentence representations by effectively capturing the nuances of the token-wise distribution.

Furthermore, we observe a trend of diminishing returns: performance gains are most substantial when increasing $D$ from 1 to 16, after which the AUC scores begin to plateau. Across all three models, saturation typically occurs around $D=128$ or $D=256$. This indicates that while capturing token-level information is crucial for robust performance, a relatively low-dimensional projection is sufficient to encode the necessary characteristics for the task.

\begin{table}[H]
 \centering
 \renewcommand{\arraystretch}{1.2}
  \caption{Impact of feature dimension ($D$) on AUC for LLaMA-3.1-8B (max pooling).}
 \label{tab:dim_max_llama8b}
 \begin{tabular}{lcccc}
 \toprule
 Dimension ($D$) & TriviaQA & SQuAD & NQ & BioASQ \\
 \midrule
 1 & 0.833 & 0.775 & 0.777 & 0.826 \\
 2 & 0.866 & 0.793 & 0.791 & 0.845 \\
 4 & 0.885 & 0.806 & 0.799 & 0.863 \\
 8 & 0.912 & 0.814 & 0.808 & 0.886 \\
 16 & 0.917 & 0.822 & 0.813 & 0.894 \\
 32 & 0.920 & 0.823 & 0.814 & 0.895 \\
 64 & 0.922 & 0.824 & 0.816 & 0.897 \\
 128 & 0.923 & 0.826 & 0.816 & 0.899 \\
 256 & 0.924 & 0.827 & 0.815 & 0.899 \\
 512 & 0.925 & 0.828 & 0.815 & 0.900 \\
 1024 & 0.925 & 0.827 & 0.814 & 0.902 \\
 \bottomrule
 \end{tabular}
 \end{table}

 \begin{table}[H]
 \centering
 \renewcommand{\arraystretch}{1.2}
 \caption{Impact of feature dimension ($D$) on AUC for Mistral-Nemo-Instruct (12B)(max pooling).}
  \label{tab:dim_max_mistral12b}
 \begin{tabular}{lcccc}
 \toprule
 Dimension ($D$) & TriviaQA & SQuAD & NQ & BioASQ \\
 \midrule
 1 & 0.917 & 0.833 & 0.821 & 0.874 \\
 2 & 0.920 & 0.839 & 0.836 & 0.876 \\
 4 & 0.930 & 0.847 & 0.842 & 0.888 \\
 8 & 0.932 & 0.847 & 0.842 & 0.890 \\
 16 & 0.936 & 0.850 & 0.845 & 0.893 \\
 32 & 0.937 & 0.854 & 0.848 & 0.896 \\
 64 & 0.939 & 0.856 & 0.849 & 0.897 \\
 128 & 0.940 & 0.857 & 0.850 & 0.897 \\
 256 & 0.940 & 0.855 & 0.850 & 0.896 \\
 512 & 0.938 & 0.854 & 0.849 & 0.894 \\
 1024 & 0.939 & 0.855 & 0.845 & 0.893 \\
 \bottomrule

 \end{tabular}
  \end{table}

 \begin{table}[H]
 \centering
 \renewcommand{\arraystretch}{1.2}
 \caption{Impact of feature dimension ($D$) on AUC for LLaMA-3.3-Instruct-70B (max pooling).}
 \label{tab:dim_max_llama70b}
 \begin{tabular}{lcccc}
 \toprule
 Dimension ($D$) & TriviaQA & SQuAD & NQ & BioASQ \\
 \midrule
 1 & 0.691 & 0.746 & 0.827 & 0.772 \\
 2 & 0.789 & 0.770 & 0.832 & 0.811 \\
 4 & 0.899 & 0.852 & 0.880 & 0.898 \\
 8 & 0.936 & 0.855 & 0.881 & 0.908 \\
 16 & 0.940 & 0.862 & 0.886 & 0.912 \\
 32 & 0.939 & 0.867 & 0.887 & 0.914 \\
 64 & 0.940 & 0.870 & 0.888 & 0.914 \\
 128 & 0.943 & 0.871 & 0.889 & 0.915 \\
 256 & 0.944 & 0.871 & 0.888 & 0.916 \\
 512 & 0.944 & 0.872 & 0.889 & 0.915 \\
 1024 & 0.943 & 0.873 & 0.889 & 0.915 \\
 \bottomrule
 \end{tabular}
 \end{table}

\section{Data Distribution}
\label{app:data_dist}
\subsection{Label Distribution}
Table \ref{tab:label_dist} presents the distribution of positive and negative samples in the training and test sets across all datasets. The labels were assigned based on responses from the Llama 3.1-8B model and subsequently annotated using GPT-5 mini. 

TriviaQA contains a disproportionately large number of negative samples, which may indicate that hallucination detection is relatively easier on this dataset. In contrast, SQuAD and NQ exhibit a higher proportion of positive samples, which may make them more challenging for hallucination detection. We conducted all experiments under these distributions.
\begin{table}[H]
\centering
\caption{Number of positive and negative samples and the class ratio $|\mathcal{S}_{neg}|/|\mathcal{S}_{pos}|$.}
\label{tab:label_dist}
\begin{tabular}{l|c|c|c|c}
\hline
\textbf{Dataset} & \textbf{Split} & \textbf{Pos. Samples} & \textbf{Neg. Samples} & \textbf{Ratio (Neg/Pos)} \\
\hline
\multirow{2}{*}{TriviaQA} & Train & 996 & 2,874 & 2.88 \\
                            & Test & 1,257 & 4,097 & \textbf{3.26} \\
\hline
\multirow{2}{*}{SQuAD}      & Train & 2,163 & 1,127 & 0.52 \\
                            & Test & 2,957 & 1,523 & \textbf{0.51} \\
\hline
\multirow{2}{*}{NQ}         & Train & 2,222 & 1,384 & 0.62 \\
                            & Test & 3,151 & 1,817 & \textbf{0.58} \\
\hline
\multirow{2}{*}{BioASQ}     & Train & 1,081 & 1,060 & 0.98 \\
                            & Test & 1,334 & 1,432 & \textbf{1.07} \\
\hline
\end{tabular}
\end{table}

\subsection{Token length Distribution}
Figure \ref{fig:token_distributions_2x2x2} illustrates the token length distributions, where the horizontal axis represents the number of tokens and the vertical axis represents the sample count. Since the model was specifically prompted to generate concise responses, the distributions are heavily concentrated within the range of fewer than $20$ tokens. These observations confirm that the experiments were conducted using a dataset characterized by short-form text generation.
\begin{figure}[H]
    \centering

    \begin{minipage}{1.0\textwidth}
        \centering
        \foreach \dirdata/\dispdata [count=\i] in {trivia_qa/TriviaQA, squad/SQuAD, nq/NQ, bioasq/BioASQ} {
            \begin{subfigure}{0.41\textwidth}
                \includegraphics[width=\linewidth]{Data/token_outputs/\dirdata/train/a.png}
                \caption{\dispdata}
            \end{subfigure}
            \ifodd\i\hfill\else\par\vspace{0.1cm}\fi 
        }

        \vspace{0.2cm}
        \textbf{(1) Train Split distributions across datasets}
    \end{minipage}

    \vspace{0.6cm}

    \begin{minipage}{1.0\textwidth}
        \centering
        \foreach \dirdata/\dispdata [count=\i] in {trivia_qa/TriviaQA, squad/SQuAD, nq/NQ, bioasq/BioASQ} {
            \begin{subfigure}{0.41\textwidth}
                \includegraphics[width=\linewidth]{Data/token_outputs/\dirdata/validation/a.png}
                \caption{\dispdata}
            \end{subfigure}
            \ifodd\i\hfill\else\par\vspace{0.1cm}\fi
        }

        \vspace{0.2cm}
        \textbf{(2) Test Split distributions across datasets}
    \end{minipage}

    \vspace{0.4cm}
    \caption{Detailed comparison of token length distributions for Training and Evaluation sets.}
    \label{fig:token_distributions_2x2x2}
\end{figure}
\section{Empirical Observations in HaMI}
\label{app:assumption_check}
\subsection{Distribution of Semantic Probability}
\label{app:sem}
Figures~\ref{fig:dist_psem_train} (train) and \ref{fig:dist_psem_val} (test) show the empirical distribution of semantic probabilities $P_{\mathrm{sem}}^{\mathbf{B}}$. Across all datasets and models, negative bags exhibit a higher density in high-probability regions than positive bags, confirming the relationship $\mathbb{E}_{neg}[P_{\mathrm{sem}}^{\mathbf{B}}] > \mathbb{E}_{pos}[P_{\mathrm{sem}}^{\mathbf{B}}]$. The expected values for each condition are provided in Appendix \ref{app:expected_values_summary}.
The AUC performance is summarized in Table~\ref{tab:se_sp_auc}. $P_{\mathrm{sem}}^{\mathbf{B}}$ consistently achieves high discriminative power. 

\begin{figure}[H]
    \centering
    \foreach \dirdata/\dispdata in {trivia_qa/TriviaQA, squad/SQuAD, nq/NQ, bioasq/BioASQ} {
        \foreach \dirmodel/\layer/\dispmodel in {llama3_8b/layer_07/Llama-3.1-8B, mistral_12b/layer_05/Mistral-12B, llama3_70b/layer_06/Llama-3.3-70B} {
            \begin{subfigure}{0.32\textwidth}
                \includegraphics[width=\linewidth]{Data/gradients/\dirmodel/\dirdata/train/\layer/gradients_\layer_prob_distribution.png}
                \caption{\dispdata\ (\dispmodel)}
            \end{subfigure}\hfill
        }
        \vspace{0.3cm}
    }
    \caption{Empirical distribution of semantic probabilities $P_{\mathrm{sem}}^{\mathbf{B}}$ in \textbf{train} data.}
    \label{fig:dist_psem_train}
\end{figure}

\begin{figure}[H]
    \centering
    \foreach \dirdata/\dispdata in {trivia_qa/TriviaQA, squad/SQuAD, nq/NQ, bioasq/BioASQ} {
        \foreach \dirmodel/\layer/\dispmodel in {llama3_8b/layer_07/Llama-3.1-8B, mistral_12b/layer_05/Mistral-12B, llama3_70b/layer_06/Llama-3.3-70B} {
            \begin{subfigure}{0.32\textwidth}
                \includegraphics[width=\linewidth]{Data/gradients/\dirmodel/\dirdata/validation/\layer/gradients_\layer_prob_distribution.png}
                \caption{\dispdata\ (\dispmodel)}
            \end{subfigure}\hfill
        }
        \vspace{0.3cm}
    }
    \caption{Empirical distribution of semantic probabilities $P_{\mathrm{sem}}^{\mathbf{B}}$ in \textbf{test} data.}
    \label{fig:dist_psem_val}
\end{figure}

\begin{table}[H]
\centering
\caption{AUC scores for Semantic Probability as a discriminator across datasets and models.}
\label{tab:se_sp_auc}
\resizebox{0.9\textwidth}{!}{%
\begin{tabular}{lccc}
\toprule
\textbf{Dataset} & \textbf{LLaMA-3.1-8B} & \textbf{Mistral-12B} & \textbf{LLaMA-3.3-70B} \\
\midrule
TriviaQA & 0.883 & 0.812 & 0.832 \\
SQuAD    & 0.812 & 0.802 & 0.817 \\
NQ       & 0.800 & 0.794 & 0.807 \\
BioASQ   & 0.840 & 0.843 & 0.810 \\
\bottomrule
\end{tabular}
}
\end{table}

\subsection{Distribution of $\bar{C}_B$}
\label{app:sensitivity}
To validate $\mathbb{E}_{neg}[\bar{C}_{\mathbf{B}}] > \mathbb{E}_{pos}[\bar{C}_{\mathbf{B}}]$, we analyzed the sensitivity distributions in Figures~\ref{fig:dist_sensitivity_train}--\ref{fig:dist_sensitivity_val} and Appendix~\ref{app:expected_values_summary}. Across all datasets, negative bags exhibit a distinct rightward shift, confirming that the network actively acquires an inherent sensitivity to the negative subspace during optimization to enhance discriminative performance. This robust, learned bias ensures that semantic scaling naturally prioritizes negative instances. Consistency across data splits further substantiates that this disparity is a fundamental property attained through learning.
\begin{figure}[H]
    \centering
    \foreach \dirdata/\dispdata in {trivia_qa/TriviaQA, squad/SQuAD, nq/NQ, bioasq/BioASQ} {
        \foreach \dirmodel/\layer/\dispmodel in {llama3_8b/layer_07/Llama-3.1-8B, mistral_12b/layer_05/Mistral-12B, llama3_70b/layer_06/Llama-3.3-70B} {
            \begin{subfigure}{0.32\textwidth}
                \includegraphics[width=\linewidth]{Data/gradients/\dirmodel/\dirdata/train/\layer/gradients_\layer_neg_dzdp_topk_mean.png}
                \caption{\dispdata\ (\dispmodel)}
            \end{subfigure}\hfill
        }
        \vspace{0.3cm}
    }
    \caption{Empirical distribution of the bag-level sensitivity to input scaling $\bar{C}_{\mathbf{B}}$ in \textbf{train} data.}
    \label{fig:dist_sensitivity_train}
\end{figure}

\begin{figure}[H]
    \centering
    \foreach \dirdata/\dispdata in {trivia_qa/TriviaQA, squad/SQuAD, nq/NQ, bioasq/BioASQ} {
        \foreach \dirmodel/\layer/\dispmodel in {llama3_8b/layer_07/Llama-3.1-8B, mistral_12b/layer_05/Mistral-12B, llama3_70b/layer_06/Llama-3.3-70B} {
            \begin{subfigure}{0.32\textwidth}
                \includegraphics[width=\linewidth]{Data/gradients/\dirmodel/\dirdata/validation/\layer/gradients_\layer_neg_dzdp_topk_mean.png}
                \caption{\dispdata\ (\dispmodel)}
            \end{subfigure}\hfill
        }
        \vspace{0.3cm}
    }
    \caption{Empirical distribution of the bag-level sensitivity to input scaling $\bar{C}_{\mathbf{B}}$ in \textbf{test} data.}
    \label{fig:dist_sensitivity_val}
\end{figure}

\clearpage

\subsection{Distribution of $P_{\mathrm{sem}}^{\mathbf{B}} \bar{C}_{\mathbf{B}}$}
We computed the joint product $P_{\mathrm{sem}}^{\mathbf{B}} \bar{C}_{\mathbf{B}}$ for each bag. Figures~\ref{fig:dist_joint_train} and \ref{fig:dist_joint_val} illustrate the empirical distributions of this product for the training and test sets, respectively. Although the distributions of individual bags naturally exhibit some overlap, the negative class consistently demonstrates a higher density at larger values. We therefore summarize the expected values for each class. Although there is an overlap in the distributions of individual bags, the class-wise expected values satisfy the inequality $\mathbb{E}_{neg}[P_{\mathrm{sem}}^{\mathbf{B}} \bar{C}_{\mathbf{B}}] > \mathbb{E}_{pos}[P_{\mathrm{sem}}^{\mathbf{B}} \bar{C}_{\mathbf{B}}]$ across all evaluated datasets and across both data splits. The expected values for each condition are provided in Appendix \ref{app:expected_values_summary}. 

\begin{figure}[H]
    \centering
    \foreach \dirdata/\dispdata in {trivia_qa/TriviaQA, squad/SQuAD, nq/NQ, bioasq/BioASQ} {
        \foreach \dirmodel/\layer/\dispmodel in {llama3_8b/layer_07/Llama-3.1-8B, mistral_12b/layer_05/Mistral-12B, llama3_70b/layer_06/Llama-3.3-70B} {
            \begin{subfigure}{0.32\textwidth}
                \includegraphics[width=\linewidth]{Data/gradients/\dirmodel/\dirdata/train/\layer/gradients_\layer_prob_weighted_neg_dzdp_topk_mean.png}
                \caption{\dispdata\ (\dispmodel)}
            \end{subfigure}\hfill
        }
        \vspace{0.3cm}
    }
    \caption{Distribution of the joint product $P_{\mathrm{sem}}^{\mathbf{B}} \bar{C}_{\mathbf{B}}$ in \textbf{train} data.}
    \label{fig:dist_joint_train}
\end{figure}

\begin{figure}[H]
    \centering
    \foreach \dirdata/\dispdata in {trivia_qa/TriviaQA, squad/SQuAD, nq/NQ, bioasq/BioASQ} {
        \foreach \dirmodel/\layer/\dispmodel in {llama3_8b/layer_07/Llama-3.1-8B, mistral_12b/layer_05/Mistral-12B, llama3_70b/layer_06/Llama-3.3-70B} {
            \begin{subfigure}{0.32\textwidth}
                \includegraphics[width=\linewidth]{Data/gradients/\dirmodel/\dirdata/validation/\layer/gradients_\layer_prob_weighted_neg_dzdp_topk_mean.png}
                \caption{\dispdata\ (\dispmodel)}
            \end{subfigure}\hfill
        }
        \vspace{0.3cm}
    }
    \caption{Distribution of the joint product $P_{\mathrm{sem}}^{\mathbf{B}} \bar{C}_{\mathbf{B}}$ in \textbf{test} data.}
    \label{fig:dist_joint_val}
\end{figure}
\clearpage
\subsection{Summary of Expected Values}
\label{app:expected_values_summary}

We summarize the expected values for the positive ($\mathbb{E}_{\mathrm{pos}}$) and negative ($\mathbb{E}_{\mathrm{neg}}$) classes across all evaluated datasets. Tables~\ref{tab:exp_llama8b_train} to \ref{tab:exp_llama70b_val} report the expectations of semantic probability ($P_{\mathrm{sem}}^{\mathbf{B}}$), sensitivity ($\bar{C}_{\mathbf{B}}$), and their joint product ($P_{\mathrm{sem}}^{\mathbf{B}} \bar{C}_{\mathbf{B}}$) for LLaMA-3.1-8B, Mistral-12B, and LLaMA-3.3-70B, evaluated on both training and test splits.

Across all models, the joint expectation $\mathbb{E}[P_{\mathrm{sem}}^{\mathbf{B}} \bar{C}_{\mathbf{B}}]$ yields a consistent threshold $1/\gamma$ in the range of approximately $0.37$ to $0.50$. This quantitative gap provides strong evidence for the robustness of the margin-widening effect: the expected penalty from positive instances remains substantially smaller than the gain from negative ones, ensuring reliable margin expansion whenever the proportion of negative samples exceeds this threshold.

Importantly, these findings extend beyond the idealized invariant setting. As shown in Section~\ref{app:active_non_invariant}, the same condition holds even under the more general non-invariant regime, where the active set varies along the path. Despite this added complexity, the empirical ratios consistently satisfy Equation~\eqref{eq:margin_condition}, demonstrating that the underlying mechanism is stable under realistic conditions.

Comparing these values with the empirical label distributions in Table~\ref{tab:label_dist}, we observe that the condition in Equation~\eqref{eq:margin_condition} is strictly satisfied across all datasets, including those with a higher proportion of positive samples such as SQuAD and NQ. This provides strong empirical support that HaMI suppresses negative instances while preserving positive signals, resulting in a more robust decision boundary for hallucination detection.
\begin{table}[H]
\centering
\caption{Empirical expected values and sensitivity ratios for \textbf{LLaMA-3.1-8B} (\textbf{Train}).}
\label{tab:exp_llama8b_train}
\resizebox{1.0\textwidth}{!}{%
\begin{tabular}{l cccccc cc}
\toprule
& \multicolumn{2}{c}{\textbf{Semantic Prob.} $\mathbb{E}[P_{\mathrm{sem}}^{\mathbf{B}}]$} 
& \multicolumn{2}{c}{\textbf{Sensitivity} $\mathbb{E}[\bar{C}_{\mathbf{B}}]$} 
& \multicolumn{2}{c}{\textbf{Joint Product} $\mathbb{E}[P_{\mathrm{sem}}^{\mathbf{B}} \bar{C}_{\mathbf{B}}]$}
& \multicolumn{2}{c}{\textbf{Ratios}} \\
\cmidrule(lr){2-3} \cmidrule(lr){4-5} \cmidrule(lr){6-7} \cmidrule(lr){8-9}
\textbf{Dataset} & Pos ($\mathbb{E}_{pos}$) & Neg ($\mathbb{E}_{neg}$) & Pos ($\mathbb{E}_{pos}$) & Neg ($\mathbb{E}_{neg}$) & Pos ($\mathbb{E}_{pos}$) & Neg ($\mathbb{E}_{neg}$) & $\gamma$ & $1/\gamma$ \\
\midrule
TriviaQA & 0.430 & \textbf{0.914} & 13.4 & \textbf{18.8} & 6.49 & \textbf{17.5} & \textbf{2.70} & \textbf{0.37} \\
SQuAD    & 0.352 & \textbf{0.730} & 13.4 & \textbf{18.6} & 5.42 & \textbf{14.8} & \textbf{2.73} & \textbf{0.37} \\
NQ       & 0.382 & \textbf{0.772} & 13.0 & \textbf{16.8} & 5.52 & \textbf{13.7} & \textbf{2.48} & \textbf{0.40} \\
BioASQ   & 0.392 & \textbf{0.756} & 12.0 & \textbf{16.0} & 5.06 & \textbf{12.8} & \textbf{2.53} & \textbf{0.40} \\
\bottomrule
\end{tabular}
}
\end{table}

\begin{table}[H]
\centering
\caption{Empirical expected values and sensitivity ratios for \textbf{LLaMA-3.1-8B} (\textbf{Test}).}
\label{tab:exp_llama8b_val}
\resizebox{1.0\textwidth}{!}{%
\begin{tabular}{l cccccc cc}
\toprule
& \multicolumn{2}{c}{\textbf{Semantic Prob.} $\mathbb{E}[P_{\mathrm{sem}}^{\mathbf{B}}]$} 
& \multicolumn{2}{c}{\textbf{Sensitivity} $\mathbb{E}[\bar{C}_{\mathbf{B}}]$} 
& \multicolumn{2}{c}{\textbf{Joint Product} $\mathbb{E}[P_{\mathrm{sem}}^{\mathbf{B}} \bar{C}_{\mathbf{B}}]$}
& \multicolumn{2}{c}{\textbf{Ratios}} \\
\cmidrule(lr){2-3} \cmidrule(lr){4-5} \cmidrule(lr){6-7} \cmidrule(lr){8-9}
\textbf{Dataset} & Pos ($\mathbb{E}_{pos}$) & Neg ($\mathbb{E}_{neg}$) & Pos ($\mathbb{E}_{pos}$) & Neg ($\mathbb{E}_{neg}$) & Pos ($\mathbb{E}_{pos}$) & Neg ($\mathbb{E}_{neg}$) & $\gamma$ & $1/\gamma$ \\
\midrule
TriviaQA & 0.451 & \textbf{0.926} & 13.8 & \textbf{18.9} & 7.09 & \textbf{17.8} & \textbf{2.51} & \textbf{0.40} \\
SQuAD    & 0.360 & \textbf{0.733} & 13.5 & \textbf{18.5} & 5.67 & \textbf{14.6} & \textbf{2.57} & \textbf{0.39} \\
NQ       & 0.383 & \textbf{0.732} & 12.9 & \textbf{16.4} & 5.53 & \textbf{12.8} & \textbf{2.31} & \textbf{0.43} \\
BioASQ   & 0.380 & \textbf{0.764} & 12.1 & \textbf{16.1} & 4.98 & \textbf{13.0} & \textbf{2.61} & \textbf{0.38} \\
\bottomrule
\end{tabular}
}
\end{table}

\begin{table}[H]
\centering
\caption{Empirical expected values and sensitivity ratios for \textbf{Mistral-12B} (\textbf{Train}).}
\label{tab:exp_mistral12b_train}
\resizebox{1.0\textwidth}{!}{%
\begin{tabular}{l cccccc cc}
\toprule
& \multicolumn{2}{c}{\textbf{Semantic Prob.} $\mathbb{E}[P_{\mathrm{sem}}^{\mathbf{B}}]$} 
& \multicolumn{2}{c}{\textbf{Sensitivity} $\mathbb{E}[\bar{C}_{\mathbf{B}}]$} 
& \multicolumn{2}{c}{\textbf{Joint Product} $\mathbb{E}[P_{\mathrm{sem}}^{\mathbf{B}} \bar{C}_{\mathbf{B}}]$}
& \multicolumn{2}{c}{\textbf{Ratios}} \\
\cmidrule(lr){2-3} \cmidrule(lr){4-5} \cmidrule(lr){6-7} \cmidrule(lr){8-9}
\textbf{Dataset} & Pos ($\mathbb{E}_{pos}$) & Neg ($\mathbb{E}_{neg}$) & Pos ($\mathbb{E}_{pos}$) & Neg ($\mathbb{E}_{neg}$) & Pos ($\mathbb{E}_{pos}$) & Neg ($\mathbb{E}_{neg}$) & $\gamma$ & $1/\gamma$ \\
\midrule
TriviaQA & 0.581 & \textbf{0.960} & 15.1 & \textbf{20.8} & 9.40 & \textbf{20.2} & \textbf{2.15} & \textbf{0.47} \\
SQuAD    & 0.457 & \textbf{0.835} & 9.60 & \textbf{15.1} & 5.09 & \textbf{13.3} & \textbf{2.61} & \textbf{0.38} \\
NQ       & 0.535 & \textbf{0.886} & 10.5 & \textbf{15.0} & 6.43 & \textbf{13.8} & \textbf{2.15} & \textbf{0.47} \\
BioASQ   & 0.480 & \textbf{0.881} & 8.54 & \textbf{11.7} & 4.53 & \textbf{10.6} & \textbf{2.34} & \textbf{0.43} \\
\bottomrule
\end{tabular}
}
\end{table}

\begin{table}[H]
\centering
\caption{Empirical expected values and sensitivity ratios for \textbf{Mistral-12B} (\textbf{Test}).}
\label{tab:exp_mistral12b_val}
\resizebox{1.0\textwidth}{!}{%
\begin{tabular}{l cccccc cc}
\toprule
& \multicolumn{2}{c}{\textbf{Semantic Prob.} $\mathbb{E}[P_{\mathrm{sem}}^{\mathbf{B}}]$} 
& \multicolumn{2}{c}{\textbf{Sensitivity} $\mathbb{E}[\bar{C}_{\mathbf{B}}]$} 
& \multicolumn{2}{c}{\textbf{Joint Product} $\mathbb{E}[P_{\mathrm{sem}}^{\mathbf{B}} \bar{C}_{\mathbf{B}}]$}
& \multicolumn{2}{c}{\textbf{Ratios}} \\
\cmidrule(lr){2-3} \cmidrule(lr){4-5} \cmidrule(lr){6-7} \cmidrule(lr){8-9}
\textbf{Dataset} & Pos ($\mathbb{E}_{pos}$) & Neg ($\mathbb{E}_{neg}$) & Pos ($\mathbb{E}_{pos}$) & Neg ($\mathbb{E}_{neg}$) & Pos ($\mathbb{E}_{pos}$) & Neg ($\mathbb{E}_{neg}$) & $\gamma$ & $1/\gamma$ \\
\midrule
TriviaQA & 0.590 & \textbf{0.966} & 15.6 & \textbf{20.8} & 10.0 & \textbf{20.3} & \textbf{2.03} & \textbf{0.49} \\
SQuAD    & 0.502 & \textbf{0.849} & 10.4 & \textbf{15.0} & 6.04 & \textbf{13.3} & \textbf{2.20} & \textbf{0.45} \\
NQ       & 0.525 & \textbf{0.881} & 10.6 & \textbf{14.8} & 6.50 & \textbf{13.5} & \textbf{2.08} & \textbf{0.48} \\
BioASQ   & 0.491 & \textbf{0.898} & 8.48 & \textbf{11.7} & 4.65 & \textbf{10.8} & \textbf{2.32} & \textbf{0.43} \\
\bottomrule
\end{tabular}
}
\end{table}

\begin{table}[H]
\centering
\caption{Empirical expected values and sensitivity ratios for \textbf{LLaMA-3.3-70B} (\textbf{Train}).}
\label{tab:exp_llama70b_train}
\resizebox{1.0\textwidth}{!}{%
\begin{tabular}{l cccccc cc}
\toprule
& \multicolumn{2}{c}{\textbf{Semantic Prob.} $\mathbb{E}[P_{\mathrm{sem}}^{\mathbf{B}}]$} 
& \multicolumn{2}{c}{\textbf{Sensitivity} $\mathbb{E}[\bar{C}_{\mathbf{B}}]$} 
& \multicolumn{2}{c}{\textbf{Joint Product} $\mathbb{E}[P_{\mathrm{sem}}^{\mathbf{B}} \bar{C}_{\mathbf{B}}]$}
& \multicolumn{2}{c}{\textbf{Ratios}} \\
\cmidrule(lr){2-3} \cmidrule(lr){4-5} \cmidrule(lr){6-7} \cmidrule(lr){8-9}
\textbf{Dataset} & Pos ($\mathbb{E}_{pos}$) & Neg ($\mathbb{E}_{neg}$) & Pos ($\mathbb{E}_{pos}$) & Neg ($\mathbb{E}_{neg}$) & Pos ($\mathbb{E}_{pos}$) & Neg ($\mathbb{E}_{neg}$) & $\gamma$ & $1/\gamma$ \\
\midrule
TriviaQA & 0.484 & \textbf{0.944} & 10.0 & \textbf{13.8} & 5.43 & \textbf{13.2} & \textbf{2.43} & \textbf{0.41} \\
SQuAD    & 0.439 & \textbf{0.842} & 8.66 & \textbf{10.8} & 4.14 & \textbf{9.40} & \textbf{2.27} & \textbf{0.44} \\
NQ       & 0.538 & \textbf{0.888} & 8.42 & \textbf{11.1} & 4.97 & \textbf{10.1} & \textbf{2.03} & \textbf{0.49} \\
BioASQ   & 0.459 & \textbf{0.866} & 9.12 & \textbf{11.9} & 4.49 & \textbf{10.6} & \textbf{2.36} & \textbf{0.42} \\
\bottomrule
\end{tabular}
}
\end{table}

\begin{table}[H]
\centering
\caption{Empirical expected values and sensitivity ratios for \textbf{LLaMA-3.3-70B} (\textbf{Test}).}
\label{tab:exp_llama70b_val}
\resizebox{1.0\textwidth}{!}{%
\begin{tabular}{l cccccc cc}
\toprule
& \multicolumn{2}{c}{\textbf{Semantic Prob.} $\mathbb{E}[P_{\mathrm{sem}}^{\mathbf{B}}]$} 
& \multicolumn{2}{c}{\textbf{Sensitivity} $\mathbb{E}[\bar{C}_{\mathbf{B}}]$} 
& \multicolumn{2}{c}{\textbf{Joint Product} $\mathbb{E}[P_{\mathrm{sem}}^{\mathbf{B}} \bar{C}_{\mathbf{B}}]$}
& \multicolumn{2}{c}{\textbf{Ratios}} \\
\cmidrule(lr){2-3} \cmidrule(lr){4-5} \cmidrule(lr){6-7} \cmidrule(lr){8-9}
\textbf{Dataset} & Pos ($\mathbb{E}_{pos}$) & Neg ($\mathbb{E}_{neg}$) & Pos ($\mathbb{E}_{pos}$) & Neg ($\mathbb{E}_{neg}$) & Pos ($\mathbb{E}_{pos}$) & Neg ($\mathbb{E}_{neg}$) & $\gamma$ & $1/\gamma$ \\
\midrule
TriviaQA & 0.554 & \textbf{0.940} & 10.6 & \textbf{13.7} & 6.44 & \textbf{13.1} & \textbf{2.03} & \textbf{0.49} \\
SQuAD    & 0.471 & \textbf{0.843} & 8.79 & \textbf{10.8} & 4.44 & \textbf{9.34} & \textbf{2.10} & \textbf{0.48} \\
NQ       & 0.525 & \textbf{0.894} & 8.27 & \textbf{11.0} & 4.81 & \textbf{10.0} & \textbf{2.08} & \textbf{0.48} \\
BioASQ   & 0.490 & \textbf{0.866} & 9.16 & \textbf{11.8} & 4.85 & \textbf{10.5} & \textbf{2.16} & \textbf{0.46} \\
\bottomrule
\end{tabular}
}
\end{table}